\newtheorem{theorem}{Theorem}
\newtheorem{lemma}{Lemma}
\newtheorem{corollary}{Corollary}
\newtheorem{proposition}{Proposition}
\newtheorem{definition}{Definition}
\DeclareMathOperator*{\argmin}{arg\,min}
\DeclareMathOperator*{\argmax}{arg\,max}
\DeclarePairedDelimiter{\ceil}{\lceil}{\rceil}
\newcommand{\norm}[1]{\left\lVert#1\right\rVert}
\newcommand{\OPT}{\textup{OPT}}
\def\l({\left(}
\def\r){\right)}
\def\bl({\Big(}
\def\br){\Big)}
\def\beq{\begin{equation}}
\def\eeq{\end{equation}}
\def\x{{\mathbf x}}
\def\y{{\mathbf y}}
\def\z{{\mathbf z}}
\def\X{{\mathbf X}}
\def\m{{\mathbf m}}
\def\A{{\mathbf A}}
\def\B{{\mathbf B}}
\def\v{{\mathbf v}}
\def\K{{\mathbf K}}
\def\I{{\mathbf I}}
\def\bbeta{\mathbf{\beta}}
\def\bomega{\mathbf{\omega}}
\def\I{{ \mathbf I }}
\def\bbeta{{\boldsymbol{\beta}}}
\def\a{{\mathbf{a}}}
\newcommand{\RALG}{\textsc{Oblivious-Greedy}}
\newcommand{\PRO}{\textsc{PRo-GREEDY}}
\newcommand{\GREEDY}{\textsc{Greedy}}
\newcommand{\opt}{{\operatorname{OPT}}}
\newcommand{\Obl}{\textsc{Oblivious}}
\newcommand{\SGREEDY}{\textsc{Stochastic-Greedy}}
\newcommand{\RGREEDY}{\textsc{Random-Greedy}}
\newcommand{\ORLIN}{\textsc{OSU}}
\newtheorem{remark}{Remark}
\newcommand{\OMP}{\textsc{Orthogonal-Matching-Pursuit}}
\begin{document}

%

%

\twocolumn[

\aistatstitle{Robust Maximization of Non-Submodular Objectives}

\aistatsauthor{ Ilija Bogunovic\textsuperscript{\dag} \And Junyao Zhao\textsuperscript{\dag} \And Volkan Cevher}

\aistatsaddress{ LIONS, EPFL \\ ilija.bogunovic@epfl.ch \And  LIONS, EPFL \\ zhaoju@student.ethz.ch \And LIONS, EPFL \\ volkan.cevher@epfl.ch  } 
]

{
  \renewcommand{\thefootnote}%
    {\fnsymbol{footnote}}
  \footnotetext[2]{Equal contribution.}
}

\begin{abstract}
  We study the problem of maximizing a monotone set function subject to a cardinality constraint $k$ in the setting where some number of elements $\tau$ is deleted from the returned set. The focus of this work is on the worst-case adversarial setting. While there exist constant-factor guarantees when the function is submodular~\cite{orlin2016robust,bogunovic2017robust}, there are no guarantees for non-submodular objectives. In this work, we present a new algorithm \RALG~and prove the first constant-factor approximation guarantees for a wider class of non-submodular objectives. The obtained theoretical bounds are the \emph{first} constant-factor bounds that also hold in the \emph{linear regime}, i.e.~when the number of deletions $\tau$ is linear in $k$. Our bounds depend on established parameters such as the \emph{submodularity ratio} and some novel ones such as the \emph{inverse curvature}. We bound these parameters for two important objectives including support selection and variance reduction. Finally, we numerically demonstrate the robust performance of \RALG~for these two objectives on various datasets.
\end{abstract}

\section{Introduction}
A wide variety of important problems in machine learning can be formulated as the maximization of a \emph{monotone}\footnote{
Non-negative and normalized (i.e. $f(\emptyset) = 0$) $f(\cdot)$ is \emph{monotone} if for any sets $X \subseteq Y \subseteq V$ it holds $f(X) \leq f(Y)$.} 
set function $f:2^V \to \mathbb{R_{+}}$ under the cardinality constraint $k$, i.e.
\begin{equation}
	\label{eq:cardinality_maximization}
	\max_{S \subseteq V, |S| \leq k} f(S),
\end{equation}
where $V= \lbrace v_1, \cdots v_n \rbrace$ is the ground set of items. However, in many applications, we might require robustness of the solution set, meaning that the objective value should deteriorate as little as possible after a subset of elements is deleted. 

For example, an important problem in machine learning is feature selection, where the goal is to extract a subset of features that are informative w.r.t.~a given task (e.g.~classification).
For some tasks, it is of great importance to select features that exhibit robustness against deletions.~This is particularly important in domains with non-stationary feature distributions or with input sensor failures~\cite{globerson2006nightmare}. Another important example is the optimization of an unknown function from point evaluations that require performing costly experiments.~When the experiments can fail, protecting against worst-case failures becomes important.

In this work, we consider the following robust variant of Problem \eqref{eq:cardinality_maximization}:
\begin{equation}
	\label{eq:robust_cardinality_maximization}
	\max_{S \subseteq V, |S| \leq k} \; \min_{E \subseteq S, |E| \leq \tau} f(S \setminus E),
\end{equation}
where\footnote{When $\tau = 0$, Problem~\eqref{eq:robust_cardinality_maximization} reduces to Problem~\eqref{eq:cardinality_maximization}.} $\tau$ is the size of subset $E$ that is removed from the solution set $S$. When the objective function exhibits \textit{submodularity}, a natural notion of diminishing returns\footnote{$f(\cdot)$ is submodular if for any sets $X \subseteq Y \subseteq V$ and any element $e \in V \setminus Y$, it holds that
$
    f(X \cup \lbrace e \rbrace) - f(X) \ge f(Y \cup \lbrace e \rbrace) - f(Y).
$}, a constant factor approximation guarantee can be obtained for the robust Problem 2 \cite{orlin2016robust,bogunovic2017robust}. However, in many applications such as the above mentioned feature selection problem, the objective function $f(\cdot)$ is not submodular and the obtained guarantees are not applicable.

\textbf{Background and related work.}~When the objective function is submodular, the simple \GREEDY~algorithm~\cite{nemhauser1978analysis} achieves a $(1 - 1/e)$-multiplicative approximation guarantee for Problem~\ref{eq:cardinality_maximization}. The constant factor can be further improved by exploiting the properties of the objective function, such as the \textit{closeness} to being modular captured by the notion of \textit{curvature} \cite{conforti1984submodular,vondrak2010submodularity,iyer2013curvature}. 

In many cases, the \GREEDY~algorithm performs well empirically even when the objective function deviates from being submodular. An important class of such objectives are $\gamma$-\textit{weakly} submodular functions. Simply put, \textit{submodularity ratio} $\gamma$ is a quantity that characterizes how \textit{close} the function is to being submodular.
It was first introduced in \cite{das2011submodular}, where it was shown that for such functions the approximation ratio of \GREEDY~for Problem~\eqref{eq:cardinality_maximization} degrades slowly as the submodularity ratio decreases i.e.~as $(1 - e^{-\gamma})$. In~\cite{bian17guarantees}, the authors obtain the approximation guarantee of the form $\alpha^{-1} (1-e^{-\gamma \alpha})$, that further depends on the curvature $\alpha$.

When the objective is submodular, the \GREEDY~algorithm can perform arbitrarily badly when applied to Problem~\eqref{eq:robust_cardinality_maximization}~\cite{orlin2016robust,bogunovic2017robust}. A submodular version of Problem~\eqref{eq:robust_cardinality_maximization} was first introduced in Krause~\emph{et al.}~\cite{krause2008robust}, while the first efficient algorithm and constant factor guarantees were obtained in Orlin~\emph{et al.}~\cite{orlin2016robust} for $\tau = o(\sqrt{k})$. In Bogunovic~\emph{et al.}~\cite{bogunovic2017robust}, the authors introduce the \PRO~algorithm that attains the same $0.387$-guarantee but it allows for greater robustness, i.e.~the allowed number of removed elements is $\tau = o(k)$. It is not clear how the obtained guarantees generalize for non-submodular functions.


One important class of non-submodular functions that we consider in this work are those used for support selection:
\begin{equation}
	\label{eq:f_support}
	f(S) := \max_{\x \in \mathcal{X}, \text{supp}(\x) \subseteq S} l(\x),
\end{equation}
where $l(\cdot)$ is a continuous function, $\mathcal{X}$ is a convex set and $\text{supp}(\x) = \lbrace i : x_i \neq 0 \rbrace $. 
A popular way to solve the problem of finding a $k$-sparse vector that maximizes $l$, i.e.~$ \x \in \argmax_{\x \in \mathcal{X}, \| \x \|_0 \leq k} l(\x)$ is to maximize the auxiliary set function in \eqref{eq:f_support} subject to the cardinality constraint $k$. This setting and its variants have been used in various applications, for example, sparse approximation~\cite{das2011submodular, cevher2011greedy}, feature selection~\cite{khanna2017scalable}, sparse recovery~\cite{candes2006stable}, sparse M-estimation~\cite{jain2014iterative} and column subset selection problems~\cite{altschuler2016greedy}.~An important result from~\cite{elenberg2016restricted} states that if $l(\cdot)$ is $(m, L)$-(strongly concave, smooth) then $f(S)$ is weakly submodular with submodularity ratio $\gamma \geq \frac{m}{L}$. Consequently, this result enlarges the number of problems where \GREEDY~comes with guarantees. In this work, we consider the robust version of this problem, where the goal is to protect against the worst-case adversarial deletions of features. 

Deletion robust submodular maximization in the streaming setting has been considered in~\cite{mitrovic2017streaming,mirzasoleiman2017deletion,kazemi2017deletion}. Other versions of robust submodular optimization problems have also been studied. In~\cite{krause2008robust}, the goal is to select a set of elements that is robust against the worst possible objective from a given finite set of monotone submodular functions. The same problem with different types of constraints is considered in~\cite{powersconstrained}. It was further studied in the domain of influence maximization~\cite{he2016robust, chen2016robust}. The robust version of the budget allocation problem was considered in~\cite{staibrobust}.
In~\cite{hassidim2016submodular}, the authors study the problem of maximizing a monotone submodular function under adversarial noise. We conclude this section by noting that very recently a couple of different works have further studied robust submodular problems~\cite{udwani2017multi,wilder2017equilibrium,anari2017robust,chen2017robust}.

\textbf{Main contributions:}
\begin{itemize}
\item We initiate the study of the robust optimization Problem~\eqref{eq:robust_cardinality_maximization} for a wider class of monotone non-submodular functions. We present a new algorithm \RALG~and prove the first constant factor approximation guarantees for Problem~\eqref{eq:robust_cardinality_maximization}. When the function is submodular and under mild conditions, we recover the approximation guarantees obtained in the previous works~\cite{orlin2016robust, bogunovic2017robust}. 

\item For both non-submodular and submodular case, we obtain the \emph{first} constant factor approximation guarantees for the \emph{linear regime}, i.e. when $\tau = ck$ for some $c \in (0,1)$.

\item Our theoretical bounds are expressed in terms of parameters that further characterize a set function. Some of them have been used in previous works, e.g. submodularity ratio, and some of them are novel, such as the \emph{inverse curvature}. We prove some interesting relations between these parameters and obtain theoretical bounds for them in two important applications:~(i) support selection and~(ii) variance reduction objective used in batch Bayesian optimization. This allows us to obtain the \emph{first robust} guarantees for these two important objectives.

\item Finally, we experimentally validate the robustness of \RALG~in several scenarios, and demonstrate that it outperforms other robust and non-robust algorithms.
\end{itemize}
\section{Preliminaries}
\label{set_ratios_section}
\textbf{Set function ratios.}~In this work, we consider a normalized monotone set function $f:2^V \to \mathbb{R}_+$; we proceed by defining several quantities that characterize it. Some of the quantities were introduced and used in various different works, while the novel ones that we consider are \textit{inverse curvature, bipartite supermodularity ratio and (super/sub)additivity ratio}.  

\begin{definition}[Submodularity~\cite{das2011submodular} and Supermodularity ratio]
The submodularity ratio of $f(\cdot)$
is the largest scalar $\gamma \in [0,1]$ s.t.
\begin{equation}
\label{eq:submodularity_ratio}
		\frac{\sum_{i \in \Omega} f(\lbrace i \rbrace | S)}{f(\Omega | S)} \geq \gamma, \quad \forall \textrm{ disjoint }S, \Omega \subseteq V.
\end{equation}
while the supermodularity ratio is the largest scalar $\check{\gamma} \in [0, 1]$ s.t.
\begin{equation}
\label{eq:supermodularity_ratio}
		\frac{f(\Omega | S)}{\sum_{i \in \Omega} f(\lbrace i \rbrace | S)} \geq \check{\gamma}, \quad \forall \textrm{ disjoint }S, \Omega \subseteq V.
\end{equation}
\end{definition}
The function $f(\cdot)$ is submodular (supermodular) \textit{iff} $\gamma=1$ ($\check{\gamma} = 1$). Hence, the submodularity/supermodularity ratio measures to what extent the function has submodular/supermodular properties. 
While $f(\cdot)$ is modular \textit{iff} $\gamma = \check{\gamma} = 1$, in general, $\gamma$ can be different from $\check{\gamma}$.

\begin{definition}[Generalized curvature~\cite{vondrak2010submodularity, bian17guarantees} and inverse generalized curvature]
The generalized curvature of $f(\cdot)$
is the smallest scalar $\alpha \in [0,1]$ s.t.
\begin{equation}
\label{eq:curvature}
		\frac{f( \lbrace i \rbrace | S \setminus \lbrace i \rbrace \cup \Omega)}{f(\lbrace i \rbrace| S \setminus \lbrace i \rbrace)} \geq  1 - \alpha, \quad \forall S, \Omega \subseteq V, i \in S \setminus \Omega,
\end{equation}
while the inverse generalized curvature is the smallest scalar $\check{\alpha} \in [0,1]$ s.t.
\begin{equation}
\label{eq:inverse_curvature}
		\frac{f(\lbrace i \rbrace| S \setminus \lbrace i \rbrace)}{f( \lbrace i \rbrace | S \setminus \lbrace i \rbrace \cup \Omega)} \geq  1 - \check{\alpha}, \quad \forall S, \Omega \subseteq V, i \in S \setminus \Omega.
\end{equation}
\end{definition}
The function $f(\cdot)$ is submodular (supermodular) \textit{iff} $\check{\alpha} = 0$ ($\alpha = 0$). The function is modular \textit{iff} $\alpha = \check{\alpha} = 0$. In general, $\alpha$ can be different from $\check{\alpha}$.

\begin{definition}[sub/superadditivity ratio]
	The subadditivity ratio of $f(\cdot)$ is the largest scalar $\nu \in [0,1]$ such that 
	\begin{equation}
		\label{eq:subadditivity}
		\frac{\sum_{i \in S} f(\lbrace i \rbrace)}{f(S)} \geq \nu, \quad \forall S \subseteq V.
	\end{equation}
	The superadditivity ratio is the largest scalar $\check{\nu} \in [0,1]$ such that 
	\begin{equation} \label{eq:superadditivity}
		\frac{f(S)}{\sum_{i \in S} f(\lbrace i \rbrace)} \geq \check{\nu}, \quad \forall S \subseteq V.
	\end{equation}
\end{definition}
If the function is submodular (supermodular) then $\nu = 1$ ($\check{\nu}=1$).
\par
The following proposition captures the relation between the above quantities.
\begin{proposition}\label{prop:const_relation}
For any $f(\cdot)$, the following relations hold:
\begin{equation*}
	\nu \geq \gamma \geq 1 - \check{\alpha} \quad \text{and} \ \check{\nu} \ge \check{\gamma} \ge 1 - \alpha.
\end{equation*}
\end{proposition}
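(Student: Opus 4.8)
The plan is to establish both chains with a common two-part strategy. In each chain the \emph{first} inequality ($\nu \ge \gamma$ and $\check\nu \ge \check\gamma$) holds because the (super)additivity ratio is exactly the (super)modularity ratio specialized to the empty base set, and the \emph{second} inequality ($\gamma \ge 1 - \check\alpha$ and $\check\gamma \ge 1 - \alpha$) follows by telescoping the marginal $f(\Omega \mid S)$ and applying the (inverse) curvature bound to each increment.

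For $\nu \ge \gamma$, I would set $S = \emptyset$ in \eqref{eq:submodularity_ratio}; since $f(\emptyset) = 0$, the inequality becomes $\sum_{i \in \Omega} f(\{i\}) \ge \gamma\, f(\Omega)$ for every $\Omega \subseteq V$, which is precisely the subadditivity inequality \eqref{eq:subadditivity}. As $\nu$ is the \emph{largest} constant satisfying \eqref{eq:subadditivity} and $\gamma \in [0,1]$ is therefore an admissible candidate, we conclude $\nu \ge \gamma$. The proof of $\check\nu \ge \check\gamma$ is the mirror image, specializing \eqref{eq:supermodularity_ratio} at $S = \emptyset$ to recover \eqref{eq:superadditivity}.

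For $\gamma \ge 1 - \check\alpha$, I would fix disjoint $S, \Omega$ and enumerate $\Omega = \{j_1, \dots, j_m\}$, so that the telescope $f(\Omega \mid S) = \sum_{t=1}^m f(\{j_t\} \mid S \cup \{j_1, \dots, j_{t-1}\})$ holds. Applying \eqref{eq:inverse_curvature} to each term with base $S$ and enlarging set $\{j_1, \dots, j_{t-1}\}$ gives $f(\{j_t\} \mid S) \ge (1 - \check\alpha)\, f(\{j_t\} \mid S \cup \{j_1, \dots, j_{t-1}\})$; summing over $t$ yields $\sum_{i \in \Omega} f(\{i\} \mid S) \ge (1 - \check\alpha)\, f(\Omega \mid S)$, so $1 - \check\alpha$ is admissible in \eqref{eq:submodularity_ratio} and hence $\gamma \ge 1 - \check\alpha$. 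The bound $\check\gamma \ge 1 - \alpha$ uses the same telescope but invokes \eqref{eq:curvature} to obtain $f(\{j_t\} \mid S \cup \{j_1, \dots, j_{t-1}\}) \ge (1 - \alpha)\, f(\{j_t\} \mid S)$, which after summation gives $f(\Omega \mid S) \ge (1 - \alpha) \sum_{i \in \Omega} f(\{i\} \mid S)$.

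The step demanding the most care is aligning the index sets of the curvature definitions with the telescope. In \eqref{eq:inverse_curvature} the ratio compares $f(\{i\} \mid S \setminus \{i\})$ with $f(\{i\} \mid S \setminus \{i\} \cup \Omega)$ for $i \in S \setminus \Omega$; to apply it I instantiate its base $S \setminus \{i\}$ as my $S$, its index $i$ as $j_t$, and its enlarging set $\Omega$ as the prefix $\{j_1, \dots, j_{t-1}\}$, checking that $j_t$ lies in neither, which is guaranteed because $S$ and $\Omega$ are disjoint. Monotonicity of $f$ ensures every marginal increment is nonnegative, so all ratios are well posed (with the usual convention when a denominator vanishes), and no further difficulty arises.
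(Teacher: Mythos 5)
Your proposal is correct and follows essentially the same route as the paper's proof: the first inequalities in each chain come from specializing the (super)modularity-ratio definitions at $S=\emptyset$, and the second from ordering $\Omega$, applying the curvature/inverse-curvature bounds to each prefix increment, and telescoping. The instantiation of the curvature definitions (base $S\cup\{j_t\}$, element $j_t$, enlarging set the prefix) matches the paper's, so nothing further is needed.
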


We also provide a more general definition of the bipartite subadditivity ratio used in~\cite{khanna2017scalable}.
\begin{definition}[Bipartite subadditivity ratio]\label{eq:two_separate_subbaditivity}
	The bipartite subadditivity ratio of $f(\cdot)$ is the largest scalar $\theta \in [0,1]$ s.t.
	\begin{equation} \label{eq:2_separate_sa}
		\frac{f(A) + f(B)}{f(S)} \geq \theta, \quad \forall S \subseteq V, A \cup B = S, A \cap B = \emptyset.
	\end{equation}
\end{definition}
\begin{remark}
\label{theta_nu_inv_nu}
For any $f(\cdot)$, it holds that $\theta \geq \check{\nu} \nu$.
\end{remark}

\par{{\textbf{Greedy guarantee.}}
Different works~\cite{das2011submodular, bian17guarantees} have studied the performance of the \GREEDY~algorithm~\cite{nemhauser1978analysis} for Problem~\ref{eq:cardinality_maximization} when the objective is $\gamma$-weakly submodular.
In our analysis, we are going to make use of the following important result from~\cite{das2011submodular}. 
\begin{lemma}
\label{lemma:greedy}
For a monotone normalized set function $f:2^V \to \mathbb{R_{+}}$, with submodularity ratio $\gamma \in [0,1]$ the \GREEDY~algorithm when run for $l$ steps returns a set $S_l$ of size $l$ such that
\[
	f(S_l) \geq  \left(1 - e^{-\gamma \frac{l}{k}}\right) f(\OPT_{(k,V)}),
\]
where $\OPT_{(k,V)}$ is used to denote the optimal set of size $k$, i.e., $\OPT_{(k,V)} \in \argmax_{S \subseteq V, |S| \leq k} f(S)$. 
\end{lemma}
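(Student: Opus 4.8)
The plan is to track the optimality gap $\delta_i := f(\OPT_{(k,V)}) - f(S_i)$ across greedy iterations and show it contracts by a factor of roughly $(1 - \gamma/k)$ at each step. First I would establish the one-step recursion. Writing $\OPT = \OPT_{(k,V)}$, monotonicity gives $f(\OPT) \leq f(\OPT \cup S_i) = f(S_i) + f(\OPT \setminus S_i \mid S_i)$, so $\delta_i \leq f(\OPT \setminus S_i \mid S_i)$.

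The heart of the argument is to bound this marginal term by the increment greedy actually achieves, using the submodularity ratio. Applying the definition of $\gamma$ in \eqref{eq:submodularity_ratio} with $S = S_i$ and the disjoint set $\Omega = \OPT \setminus S_i$ yields
\[
f(\OPT \setminus S_i \mid S_i) \leq \frac{1}{\gamma} \sum_{j \in \OPT \setminus S_i} f(\{j\} \mid S_i).
\]
Each summand is at most the best available marginal gain, which is exactly what greedy realizes, i.e. $f(\{j\} \mid S_i) \leq f(S_{i+1}) - f(S_i)$; and since $|\OPT \setminus S_i| \leq k$ there are at most $k$ summands. Combining these facts gives $\delta_i \leq \frac{k}{\gamma}\,(f(S_{i+1}) - f(S_i)) = \frac{k}{\gamma}\,(\delta_i - \delta_{i+1})$, which rearranges to the contraction $\delta_{i+1} \leq (1 - \gamma/k)\,\delta_i$.

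To finish, I would unroll the recursion from $\delta_0 = f(\OPT) - f(\emptyset) = f(\OPT)$ (using normalization), obtaining $\delta_l \leq (1 - \gamma/k)^l f(\OPT)$, and then apply the elementary inequality $1 - x \leq e^{-x}$ with $x = \gamma/k$ to get $\delta_l \leq e^{-\gamma l/k} f(\OPT)$. Substituting the definition of $\delta_l$ and rearranging yields the claimed bound $f(S_l) \geq (1 - e^{-\gamma l/k}) f(\OPT)$.

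As for the main obstacle: the only genuinely delicate point is the correct application of the submodularity ratio — specifically recognizing that it is defined for an arbitrary disjoint pair, and that choosing $\Omega = \OPT \setminus S_i$ (rather than all of $\OPT$) preserves disjointness with $S_i$ while still dominating the gap $\delta_i$. Everything else is a routine telescoping/contraction argument. One minor subtlety worth checking is the degenerate case $\OPT \setminus S_i = \emptyset$, where $\delta_i \leq 0$ and the inequality holds trivially.
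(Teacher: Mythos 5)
Your proof is correct and is essentially the standard argument from Das and Kempe, which is exactly the source the paper cites for this lemma without reproducing a proof: bound the gap $f(\OPT_{(k,V)}) - f(S_i)$ by $f(\OPT_{(k,V)} \setminus S_i \mid S_i)$ via monotonicity, invoke the submodularity ratio with the disjoint pair $(S_i, \OPT_{(k,V)} \setminus S_i)$, dominate each singleton marginal by the greedy increment, and unroll the resulting $(1-\gamma/k)$ contraction. The only point worth adding is the trivial case $\gamma = 0$, where the stated bound is vacuous and the division by $\gamma$ should be sidestepped; otherwise the argument is complete.
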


\section{Algorithm and its Guarantees}
\label{section_algorithm_and_guarantees}
We present our $\RALG$~algorithm in Algorithm~\ref{algorithm:alg}.~The algorithm requires a non-negative monotone set function $f:2^V \to \mathbb{R}_{+}$, and the ground set of items $V$. It constructs two sets $S_0$ and $S_1$. The first set $S_0$ is constructed via oblivious selection, i.e.~$\lceil \beta \tau \rceil$ items with the individually highest objective values are selected. Here, $\beta \in \mathbb{R}_+$ is an input parameter, that together with $\tau$, determines the size of $S_0$ ($|S_0| = \lceil \beta \tau \rceil \leq k$). We provide more information on this parameter in the next section. The second set $S_1$, of size $k - |S_0|$, is obtained by running the $\GREEDY$ algorithm on the remaining items $V \setminus S_0$. Finally, the algorithm outputs the set $S = S_0 \cup S_1$ of size $k$ that is robust against the worst-case removal of $\tau$ elements.

Intuitively, the role of $S_0$ is to ensure robustness, as its elements are selected independently of each other and have high marginal values, while $S_1$ is obtained greedily and it is near-optimal on the set $V \setminus S_0$. 


\RALG~is simpler than the submodular algorithms \PRO~\cite{bogunovic2017robust} and \ORLIN~\cite{orlin2016robust}. 
Both of these algorithms construct multiple sets (buckets) whose number and size depend on the input parameters $k$ and $\tau$. In contrast, \RALG~always constructs two sets, where the first set is obtained by the fast $\Obl$ selection.

\begin{algorithm}[t!]
    \caption{$\RALG$ algorithm \label{algorithm:alg}}
    \begin{algorithmic}[1]
      \Require  Set $V$, $k$, $\tau$, $\beta \in \mathbb{R}_+ \text{ and } \lceil \beta \tau \rceil \leq k$
       \Ensure Set $S \subseteq V$ such that $|S| \leq k$
            \State $S_0, S_1 \gets \emptyset$
            \For {$ i \gets 0 \textbf{ to } 	\lceil \beta \tau \rceil$}
                \State  $ v \gets \argmax_{v \in V \setminus S_0} f(\lbrace v \rbrace)$
                \State $S_0 \gets S_0 \cup \lbrace v \rbrace$
            \EndFor
        \State $S_1 \gets \GREEDY (k - |S_0|,\ (V \setminus S_0))$
    \State {$S \gets S_0 \cup S_1$}\\
    \Return $S$ 
    \end{algorithmic}
\end{algorithm}
 
For Problem~\eqref{eq:cardinality_maximization} and the weakly submodular objective, the $\GREEDY$ algorithm achieves a constant factor approximation (Lemma~\ref{lemma:greedy}), while $\Obl$ selection achieves $(\gamma / k)$-approximation~\cite{khanna2017scalable}. For the harder Problem~\eqref{eq:robust_cardinality_maximization}, $\GREEDY$ can fail arbitrarily badly~\cite{bogunovic2017robust}. Interestingly enough, the combination of these two algorithms reflected in $\RALG$ leads to a constant factor approximation for Problem~\eqref{eq:robust_cardinality_maximization}.

\subsection{Approximation guarantee}
The quantity of interest in this section is the remaining utility after the adversarial removal of elements $f(S \setminus E_S^*)$, where $S$ is the set of size $k$ returned by \RALG, and $E_S^*$ is the set of size $\tau$ chosen by the adversary, i.e.,
$
	E_S^* \in \argmin_{E \subset S, |E| \leq \tau} f(S\setminus E).
$
Let $\opt_{(k - \tau, V \setminus E_S^*)}$ denote the optimal solution, of size $k - \tau$, when the ground set is $V \setminus E_S^*$. The goal in this section is to compare $f(S \setminus E_S^*)$ to $f(\opt_{(k - \tau, V \setminus E_S^*)})$.\footnote{As shown in~\cite{orlin2016robust}, $ f(\opt_{(k - \tau, V \setminus E_S^*)}) \geq f(\opt \setminus E_{\opt}^*)$, where $\opt$ is the optimal solution to Problem~\eqref{eq:robust_cardinality_maximization}. } All the omitted proofs from this section can be found in the supplementary material.
\par
\textbf{Intermediate results.}
Before stating our main result, we provide three lower bounds on $f(S \setminus E_S^*)$. For the returned set $S = S_0 \cup S_1$, we let $E_0$ denote elements removed from $S_0$, i.e., $E_0:= E_S^* \cap S_0 $ and similarly $E_1:= E_S^* \cap S_1$.
The first lemma is borrowed from~\cite{bogunovic2017robust}, and states that $f(S \setminus E_S^*)$ is at least some constant fraction of the utility of the elements obtained greedily in the second stage.
\begin{lemma}\label{lemma:mu}
For any $f(\cdot)$ (not necessarily submodular), let $\mu \in [0,1]$ be a constant such that $f(E_1\ |\ (S \setminus E_S^*)) = \mu f(S_1)$ holds. Then, $f(S \setminus E_S^*) \ge  (1 - \mu) f(S_1)$.
\end{lemma}

The next lemma generalizes the result obtained in~\cite{orlin2016robust,bogunovic2017robust}, and applies to any non-negative monotone set function with bipartite subadditivity ratio $\theta$.
\begin{lemma}\label{lemma:second_lower_bound}
	Let $\theta \in [0,1]$ be a bipartite subadditivity ratio defined in Eq.~\eqref{eq:2_separate_sa}. Then 
	$f(S \setminus E_S^*)$ is at least
	\begin{equation*}
	 	 \theta f(\opt_{(k - \tau, V \setminus E_S^*)})\\ - (1 - e^{-\frac{k - |S_0|}{k-\tau}})^{-1}f(S_1).
	\end{equation*}
\end{lemma}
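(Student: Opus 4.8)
The plan is to benchmark the surviving value $f(S\setminus E_S^*)$ against $f(O)$, where I abbreviate $O:=\opt_{(k-\tau,\,V\setminus E_S^*)}$, by \emph{adjoining} $O$ to the surviving set and then splitting the union with the bipartite subadditivity ratio. Concretely, set $A:=S\setminus E_S^*$ and $B:=O\setminus A$, so that $A\cup B=(S\setminus E_S^*)\cup O$ and $A\cap B=\emptyset$. Applying the defining inequality~\eqref{eq:2_separate_sa} to this partition, and then monotonicity (since $O\subseteq A\cup B$), gives
\[
 f(S\setminus E_S^*)+f\big(O\setminus(S\setminus E_S^*)\big)\;\ge\;\theta\, f\big((S\setminus E_S^*)\cup O\big)\;\ge\;\theta\, f(O).
\]
Rearranging yields $f(S\setminus E_S^*)\ge \theta f(O)-f\big(O\setminus(S\setminus E_S^*)\big)$, so it remains only to bound the residual term $f\big(O\setminus(S\setminus E_S^*)\big)$ by the second-stage greedy utility $f(S_1)$.

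The crucial structural step is to show that $B=O\setminus(S\setminus E_S^*)$ lives entirely in the ground set on which \GREEDY~was run, i.e.\ $B\subseteq V\setminus S_0$. Indeed, $O$ is disjoint from the removed set $E_S^*$ by definition, so using the decomposition $E_S^*=E_0\cup E_1$, any element of $O$ that happens to lie in $S_0$ must in fact lie in $S_0\setminus E_0\subseteq S\setminus E_S^*$; such an element is therefore excluded from $B$ by construction, giving $B\cap S_0=\emptyset$. Since moreover $|B|\le|O|=k-\tau$, the set $B$ is a feasible size-$(k-\tau)$ subset of $V\setminus S_0$, whence by definition of the restricted optimum
\[
 f\big(O\setminus(S\setminus E_S^*)\big)\;\le\; f\big(\opt_{(k-\tau,\,V\setminus S_0)}\big).
\]

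Finally I invoke the greedy guarantee. The set $S_1$ is produced by running \GREEDY~for $l=k-|S_0|$ steps on $V\setminus S_0$; applying Lemma~\ref{lemma:greedy} on this ground set with reference cardinality $k-\tau$ gives
\[
 f(S_1)\;\ge\;\Big(1-e^{-\frac{k-|S_0|}{k-\tau}}\Big)\,f\big(\opt_{(k-\tau,\,V\setminus S_0)}\big),
\]
that is, $f\big(\opt_{(k-\tau,\,V\setminus S_0)}\big)\le\big(1-e^{-\frac{k-|S_0|}{k-\tau}}\big)^{-1}f(S_1)$. Chaining this with the two previous displays produces exactly the claimed lower bound.

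I expect the main obstacle to be the containment $B\subseteq V\setminus S_0$: it is precisely what makes the residual controllable by the \emph{second-stage} output $f(S_1)$ rather than by some uncontrolled quantity, and it hinges on using that $O$ avoids \emph{all} of $E_S^*$ (not merely $E_1$), together with the split $E_S^*=E_0\cup E_1$. A secondary point of care is the bookkeeping in the greedy step: the number of iterations $k-|S_0|$ differs from the reference cardinality $k-\tau$, so Lemma~\ref{lemma:greedy} must be applied with these two quantities kept distinct to produce the exponent $\tfrac{k-|S_0|}{k-\tau}$ (with the submodularity ratio $\gamma$ carried in that exponent by the lemma).
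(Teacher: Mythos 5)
Your proof is correct and follows essentially the same route as the paper's: apply the bipartite subadditivity ratio to a partition that separates the surviving elements from a leftover piece of the optimum contained in $V\setminus S_0$, bound that leftover by $f(\opt_{(k-\tau,\,V\setminus S_0)})$, and convert to $f(S_1)$ via the greedy guarantee. The only cosmetic differences are that you partition $(S\setminus E_S^*)\cup O$ while the paper partitions $\opt_{(k-\tau,\,V\setminus E_0)}$ into its intersection with $S_0\setminus E_0$ and the rest, and that you carry out the final greedy step explicitly (with the factor $\gamma$ in the exponent, which the lemma's statement omits but the theorem's proof uses), whereas the paper defers it to the proof of the main theorem.
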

In other words, if $f(S_1)$ is \textit{small} compared to the utility of the optimal solution, then $f(S \setminus E_S^*)$ is at least a constant factor away from the optimal solution.

\par
Next, we present our key lemma that further relates $f(S \setminus E^*_S)$ to the utility of the set $S_1$ with no deletions.

\begin{lemma}\label{lemma:S0-E0}
Let $\beta$ be a constant such that $|S_0| = \lceil \beta \tau \rceil$ and $|S_0| \leq k$, and let $\check{\nu}, \check{\alpha} \in [0,1]$ be a superadditivity ratio and generalized inverse curvature (Eq.~\eqref{eq:superadditivity} and Eq.~\eqref{eq:inverse_curvature}, respectively). Finally, let $\mu$ be a constant defined as in Lemma~\ref{lemma:mu}. Then,
\begin{equation*}
	f(S \setminus E^*_S) \geq (\beta - 1) \check{\nu} (1 - \check{\alpha}) \mu f(S_1).
\end{equation*}
\end{lemma}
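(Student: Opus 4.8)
The plan is to lower-bound $f(S\setminus E^*_S)$ by a chain of four inequalities that pass from the surviving set down to the singleton values of $S_0$, across to the singleton values of $E_1$, and finally back to the marginal gain $\mu f(S_1)$ supplied by Lemma~\ref{lemma:mu}. First, since $S_0\setminus E_0 \subseteq S\setminus E^*_S$, monotonicity gives $f(S\setminus E^*_S)\ge f(S_0\setminus E_0)$, and applying the superadditivity ratio \eqref{eq:superadditivity} to the set $S_0\setminus E_0$ yields $f(S_0\setminus E_0)\ge \check{\nu}\sum_{i\in S_0\setminus E_0} f(\lbrace i\rbrace)$. This reduces the problem to comparing sums of singleton values.

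Second, I would exploit the defining property of the oblivious phase: every element placed in $S_0$ has singleton value no smaller than any element of $V\setminus S_0$, and hence no smaller than any element of $E_1\subseteq S_1\subseteq V\setminus S_0$. Thus $\min_{i\in S_0\setminus E_0} f(\lbrace i\rbrace)\ge \max_{j\in E_1} f(\lbrace j\rbrace)$. Combined with the size bookkeeping $|S_0\setminus E_0|\ge \lceil\beta\tau\rceil-|E_0|\ge(\beta-1)\tau\ge(\beta-1)|E_1|$ (using $|E_0|+|E_1|\le\tau$ and $|E_1|\le\tau$), a term-by-term domination argument gives $\sum_{i\in S_0\setminus E_0} f(\lbrace i\rbrace)\ge(\beta-1)\sum_{j\in E_1} f(\lbrace j\rbrace)$. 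This is the step that produces the $(\beta-1)$ factor and makes precise the role of $S_0$ as a robustness buffer against the $\tau$ deletions.

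Third, I would relate $\sum_{j\in E_1} f(\lbrace j\rbrace)$ back to the marginal $f(E_1\,|\,S\setminus E^*_S)$. Enumerating $E_1=\lbrace j_1,\dots,j_m\rbrace$ and telescoping, $f(E_1\,|\,S\setminus E^*_S)=\sum_{t} f(\lbrace j_t\rbrace\,|\,(S\setminus E^*_S)\cup\lbrace j_1,\dots,j_{t-1}\rbrace)$. Each such marginal conditions on a set not containing $j_t$, so invoking the inverse generalized curvature \eqref{eq:inverse_curvature} with base set $\emptyset$ and context $\Omega=(S\setminus E^*_S)\cup\lbrace j_1,\dots,j_{t-1}\rbrace$ bounds it by $(1-\check{\alpha})^{-1} f(\lbrace j_t\rbrace)$. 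Summing over $t$ gives $f(E_1\,|\,S\setminus E^*_S)\le(1-\check{\alpha})^{-1}\sum_{j\in E_1} f(\lbrace j\rbrace)$, equivalently $\sum_{j\in E_1} f(\lbrace j\rbrace)\ge(1-\check{\alpha})\,f(E_1\,|\,S\setminus E^*_S)$. Chaining the bounds and substituting $f(E_1\,|\,S\setminus E^*_S)=\mu f(S_1)$ from Lemma~\ref{lemma:mu} delivers the stated inequality.

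The main obstacle I anticipate is the inverse-curvature step: one must verify that every telescoped single-element marginal is conditioned on a set excluding that element (so that \eqref{eq:inverse_curvature} applies with $i\notin\Omega$), and confirm the direction of the bound, namely that $(1-\check{\alpha})^{-1}$ is an \emph{upper} bound on a marginal taken over an enlarged context. The counting step also warrants care: one should check that $|S_0\setminus E_0|\ge(\beta-1)|E_1|$ holds for every adversarial split of $E^*_S$ into $E_0$ and $E_1$, including the case where the adversary concentrates removals on $S_1$; the inequalities $|E_1|\le\tau$ and $|S_0\setminus E_0|\ge(\beta-1)\tau$ together cover all such splits (and the claim is trivial when $\beta<1$ by nonnegativity of $f$).
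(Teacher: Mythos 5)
Your proposal is correct and follows essentially the same route as the paper's proof: monotonicity to pass to $f(S_0\setminus E_0)$, the superadditivity ratio to reduce to singleton sums, the oblivious-selection domination plus the counting bound $|S_0\setminus E_0|\ge(\beta-1)\tau\ge(\beta-1)|E_1|$ to obtain the $(\beta-1)$ factor, and a telescoped application of the inverse generalized curvature (with base set a singleton so that the conditioning set is empty) to convert $\sum_{j\in E_1}f(\lbrace j\rbrace)$ into $(1-\check{\alpha})\,f(E_1\,|\,S\setminus E_S^*)=(1-\check{\alpha})\,\mu f(S_1)$. The two points you flag as requiring care (the direction of the curvature bound and the validity of the counting for every adversarial split) are exactly the ones the paper handles, and your resolutions match.
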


\begin{proof}
We have:
	\begin{align}
		f(S \setminus E_S^*) &\geq f(S_0 \setminus E_0) \nonumber \\
		& \geq \check{\nu} \sum_{e_i \in S_0 \setminus E_0} f(\lbrace e_i \rbrace)  \label{eq:lower_bound_3_1}\\
		& \geq \frac{|S_0 \setminus E_0|}{|E_1|} \check{\nu} \sum_{e_i \in E_1} f(\lbrace e_i \rbrace)  \label{eq:lower_bound_3_2}\\
		& \geq \frac{(\beta - 1) \tau}{\tau} \check{\nu} \sum_{e_i \in E_1} f(\lbrace e_i \rbrace) \label{eq:lower_bound_3_3}\\
		& \geq (\beta - 1) \check{\nu} (1 - \check{\alpha})  \nonumber \\
		& \quad \times \sum_{i = 1}^{|E_1|} f \left(\lbrace e_i \rbrace| (S \setminus E_S^*) \cup E_1^{(i-1)} \right) \label{eq:lower_bound_3_4}\\
		& = (\beta - 1) \check{\nu} (1 - \check{\alpha}) f \left(E_1 | (S \setminus E_S^*) \right) \label{eq:lower_bound_3_5}\\
		&= (\beta - 1) \check{\nu} (1 - \check{\alpha}) \mu f(S_1) \label{eq:lower_bound_3_6}.
	\end{align}
Eq.~\eqref{eq:lower_bound_3_1} follows by the superadditivity. Eq.~\eqref{eq:lower_bound_3_2} follows from the way $S_0$ is constructed, i.e. via $\Obl$~selection that ensures $f(\lbrace i \rbrace) \geq f(\lbrace j \rbrace)$ for every $i \in S_0 \setminus E_0$ and $j \in E_1$. Eq.~\eqref{eq:lower_bound_3_3} follows from 
$	|S_0 \setminus E_0| = \lceil \beta \tau \rceil - |E_0| 
			    \geq \beta \tau - \tau 
			    = (\beta - 1) \tau$,
and $|E_1| \leq \tau$. 

To prove Eq.~\eqref{eq:lower_bound_3_4}, let $E_1 = \lbrace e_1, \cdots e_{|E_1|} \rbrace$, and let $E_1^{(i-1)} \subseteq E_1$ denote the set $\lbrace e_1, \cdots, e_{i-1} \rbrace$. Also, let $E_1^{(0)} = \emptyset$. Eq.~\eqref{eq:lower_bound_3_4} then follows from 
\[
	f(\lbrace e_i \rbrace) \geq (1 - \check{\alpha}) f \left(\lbrace e_i \rbrace| (S \setminus E_S^*) \cup E_1^{(i-1)} \right),
\]
which in turns follows from~\eqref{eq:inverse_curvature} by setting $S = \lbrace e_i \rbrace $ and $\Omega = (S \setminus E_S^*) \cup E_1^{(i-1)}$. 

Finally, Eq.~\eqref{eq:lower_bound_3_5} follows from $f \left(E_1 | (S \setminus E_S^*) \right) = \sum_{e_i \in E_1} f \left(\lbrace e_i \rbrace| (S \setminus E_S^*) \cup E_1^{(i-1)} \right)$ (telescoping sum) and Eq.~\eqref{eq:lower_bound_3_6} follows from the definition of $\mu$.
\end{proof}
\par
\textbf{Main result.} We obtain the main result by examining the maximum of the obtained lower bounds in Lemma \ref{lemma:mu}, \ref{lemma:second_lower_bound} and \ref{lemma:S0-E0}. Note, that all three obtained lower bounds depend on $f(S_1)$. In Lemma~\ref{lemma:second_lower_bound}, we benefit from $f(S_1)$ being small while the opposite is true for Lemma~\ref{lemma:mu} and~\ref{lemma:S0-E0} (both bounds are increasing in $f(S_1)$). By examining the latter two, we observe that in Lemma~\ref{lemma:mu} we benefit from $\mu$ being small (i.e.~the utility that we lose due to $E_1$ is small compared to the utility of the whole set $S_1$) while the opposite is true for Lemma~\ref{lemma:S0-E0}. By carefully balancing between these cases (see Appendix~\ref{appendix_2} for details) we arrive at our main result.
\begin{figure}
    \centering
    \includegraphics[scale=0.3]{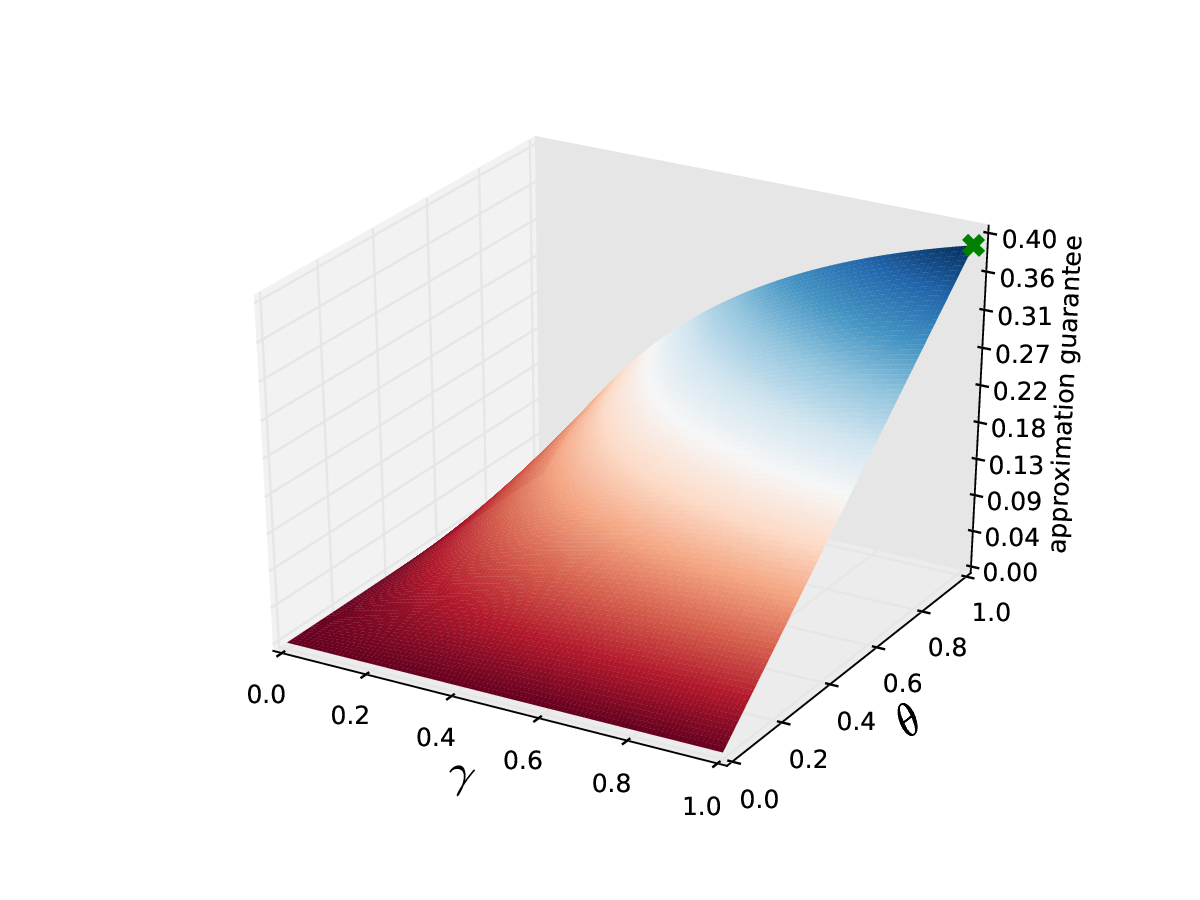}
    \caption{Approximation guarantee obtained in Remark~\ref{cor:main_remark}. The green cross represents the approximation guarantee when
    $f$ is submodular ($\gamma = \theta = 1$).
    }
    \label{fig:guarantee}
    \vspace{-2mm}
\end{figure}

\begin{theorem}\label{thm:main_thm}
Let $f:2^V \to \mathbb{R_{+}}$ be a normalized, monotone set function with submodularity ratio $\gamma$, bipartite subadditivity ratio $\theta$, inverse curvature $\check{\alpha}$ and superadditivity ratio $\check{\nu}$, every parameter in $[0,1]$. For a given budget $k$ and $\tau = \ceil{ck}$, for some $c \in (0,1)$, the $\RALG$~algorithm with $\beta $ s.t. $\ceil{\beta \tau} \leq k$ and $\beta > 1$, 
, returns a set $S$ of size $k$ such that when $k\to \infty$ we have
\[	
	f(S\setminus E_S^*) \ge \frac{\theta P \left(1-e^{-\gamma\frac{1 -   \beta c }{1- c}}\right)}{1 + P\left(1-e^{-\gamma\frac{1 -   \beta c }{1-c}}\right)} f(\opt_{(k - \tau, V \setminus E_S^*)}).
\]
where $P$ is used to denote $\frac{(\beta-1)\check{\nu}(1-\check{\alpha})}{1+(\beta-1)\check{\nu}(1-\check{\alpha})}$. 
\end{theorem}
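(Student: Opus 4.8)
The plan is to read the three intermediate lemmas as three simultaneous lower bounds on the single quantity $R := f(S\setminus E_S^*)$ and then to eliminate the two instance-dependent ``nuisance'' parameters one at a time by a worst-case (minimize-over-parameter) argument. Write $Q := (\beta-1)\check{\nu}(1-\check{\alpha})$, so that $P = \tfrac{Q}{1+Q}$, and abbreviate $x := f(S_1)$ and $O := f(\opt_{(k-\tau,\, V\setminus E_S^*)})$. With this notation Lemma~\ref{lemma:mu} reads $R \ge (1-\mu)x$, Lemma~\ref{lemma:S0-E0} reads $R \ge Q\mu x$, and Lemma~\ref{lemma:second_lower_bound} reads $R \ge \theta O - Cx$, where the constant $C = \bigl(1-e^{-\gamma\frac{k-|S_0|}{k-\tau}}\bigr)^{-1}$ inherits its $\gamma$-factor from the \GREEDY~guarantee of Lemma~\ref{lemma:greedy} that is invoked inside Lemma~\ref{lemma:second_lower_bound}.

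First I would eliminate $\mu$. The parameter $\mu\in[0,1]$ is fixed by the instance but not under our control, so from the first two bounds we get $R \ge \max\bigl((1-\mu)x,\, Q\mu x\bigr)$. The map $\mu \mapsto \max(1-\mu, Q\mu)$ has a decreasing branch and an increasing branch that cross at $\mu^\star = \tfrac{1}{1+Q}$ with common value $\tfrac{Q}{1+Q} = P$; hence $\max(1-\mu,Q\mu) \ge P$ for every $\mu\in[0,1]$, and therefore
\[
	R \;\ge\; P\, f(S_1).
\]
This step is exactly where the two competing forces noted in the paper get balanced: Lemma~\ref{lemma:mu} favors small $\mu$ and Lemma~\ref{lemma:S0-E0} favors large $\mu$, and the crossing value $P$ is the best bound uniform in $\mu$.

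Next I would eliminate $x = f(S_1)$ against Lemma~\ref{lemma:second_lower_bound}. We now have $R \ge Px$ and $R \ge \theta O - Cx$, so $R \ge \max(Px,\, \theta O - Cx)$ for the true (but unknown) value of $x\ge 0$. Since $Px$ is increasing and $\theta O - Cx$ is decreasing in $x$, their maximum is minimized at the crossing point $Px = \theta O - Cx$, i.e.\ $x = \tfrac{\theta O}{P+C}$, where the common value is $\tfrac{P\theta}{P+C}\,O$; as this quantity lower-bounds $\max(Px,\theta O - Cx)$ for all $x\ge 0$, it is a valid bound for the actual $f(S_1)$, giving $R \ge \tfrac{P\theta}{P+C}\,O$. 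Finally I would take the limit $k\to\infty$: with $|S_0| = \lceil\beta\tau\rceil$ and $\tau = \lceil ck\rceil$ (and $\beta c \le 1$, which the hypothesis $\lceil\beta\tau\rceil\le k$ guarantees), $\tfrac{k-|S_0|}{k-\tau}\to\tfrac{1-\beta c}{1-c}$, so $C \to 1/G$ with $G := 1-e^{-\gamma\frac{1-\beta c}{1-c}}$. Substituting, $\tfrac{P\theta}{P+C}\to \tfrac{P\theta}{P+1/G} = \tfrac{\theta PG}{1+PG}$, which is precisely the claimed bound.

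The main obstacle is conceptual rather than computational: recognizing that the proof is a \emph{two-level} worst-case optimization, that $\mu$ and $f(S_1)$ can be treated as independent free parameters and eliminated sequentially, and that each of the three lemmas pushes $f(S_1)$ in a definite direction (large for Lemmas~\ref{lemma:mu} and~\ref{lemma:S0-E0}, small for Lemma~\ref{lemma:second_lower_bound}). Once this structure is seen, the algebra collapses to two crossing-point computations; the asymptotic evaluation of $C$ and the bookkeeping of the ceilings to confirm $1-\beta c \ge 0$ are routine.
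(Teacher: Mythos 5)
Your proposal is correct and follows essentially the same route as the paper: the balancing of $\mu$ is the paper's case analysis plus Eq.~\eqref{eq:new_mu_S1}, and your crossing-point elimination of $f(S_1)$ is exactly the content of the paper's auxiliary Lemma~\ref{lemma:max_inc_dec}, applied after inserting the \GREEDY{} guarantee (the paper phrases Lemma~\ref{lemma:second_lower_bound} in terms of $f(\opt_{(k-\tau,V\setminus S_0)})$ and converts to $f(S_1)$ at the end, which is the same computation). You also correctly supply the factor $\gamma$ in the constant $C$, which is consistent with Lemma~\ref{lemma:greedy} and the final bound even though it is dropped in the main-text statement of Lemma~\ref{lemma:second_lower_bound}.
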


\begin{remark}
\label{cor:main_remark}
Consider $f(\cdot)$ from Theorem~\ref{thm:main_thm} with $\check{\nu} \in (0,1]$ and $\check{\alpha} \in [0,1)$. When $\tau = o \left(\frac{k}{\beta} \right)$ and $\beta \geq \log k$ we have:
		\[f(S\setminus E_S^*)\ge \left( \theta\frac{1-e^{-\gamma}}{2-e^{-\gamma}} + o(1) \right) f(\opt_{(k - \tau, V \setminus E_S^*)}).\]
\end{remark}

\textbf{Interpretation.} 
An open question from~\cite{bogunovic2017robust} is whether a constant factor approximation guarantee is possible in the \emph{linear regime}, i.e.~when the number of removals is $\tau = \ceil{ck}$ for some constant $c \in (0,1)$ \cite{bogunovic2017robust}. In Theorem~\ref{thm:main_thm} we obtain the first asymptotic constant factor approximation in this regime.

Additionally, when $f$ is submodular, all the parameters in the obtained bound are fixed ($\check{\alpha} = 0$ and $\gamma = \theta = 1$ due to submodularity) except the superadditivity ratio $\check{\nu}$ which can take any value in $[0,1]$. The approximation factor improves for greater $\check{\nu}$, i.e.~the closer the function is to being superadditive. On the other hand, if $f$ is supermodular then $\check{\nu} = 1$ while $\check{\alpha}, \theta, \gamma$ are in $[0,1]$, and the approximation factor improves for larger $\theta$ and $\gamma$, and smaller $\check{\alpha}$.

From Remark~\ref{cor:main_remark}, when $f$ is submodular, \RALG~achieves an asymptotic approximation factor of at least $0.387$. This matches the approximation guarantee obtained in \cite{bogunovic2017robust,orlin2016robust}, while it allows for a greater number of deletions $\tau = o\left(\frac{k}{\log k}\right)$ in comparison to $\tau = o \left(\frac{k}{\log^3 k}\right)$ and $\tau= o(\sqrt{k})$ obtained in \cite{bogunovic2017robust} and \cite{orlin2016robust}, respectively. Most importantly, our result holds for a wider range of non-submodular functions. In Figure~\ref{fig:guarantee} we show how the asymptotic approximation factor changes as a function of $\gamma$ and $\theta$. 


\par
We also obtain an alternative formulation of our main result, which we present in the following corollary.

\begin{corollary}\label{cor:main_thm_cor2}
Consider the setting from Theorem~\ref{thm:main_thm} and let $P:= \frac{(\beta-1)\check{\nu}\nu}{1+(\beta-1)\check{\nu}(1-\nu)}$. Then we have 
\[	
	f(S\setminus E_S^*) \geq \frac{ \theta^2 P \left(1-e^{-\gamma\frac{1 -   \beta c }{1- c}}\right)}{1 + \theta P\left(1-e^{-\gamma\frac{1 -   \beta c }{1- c}}\right)} f(\opt_{(k - \tau, V \setminus E_S^*)}).
\]
Additionally, consider $f(\cdot)$ with $\check{\nu}, \nu \in (0,1]$. When $\tau = o \left(\frac{k}{\beta} \right)$ and $\beta \geq \log k$, as $k\to \infty$, we have that $f(S\setminus E_S^*)$ is at least
\[ \left(\frac{\theta^2 (1-e^{-\gamma})}{1 + \theta(1-e^{-\gamma})} + o(1) \right) f(\opt_{(k - \tau, V \setminus E_S^*)}).\]
\end{corollary}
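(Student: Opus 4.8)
The plan is to mirror the proof of Theorem~\ref{thm:main_thm}, retaining the bound of Lemma~\ref{lemma:second_lower_bound} unchanged and substituting a subadditivity-based lower bound in place of the inverse-curvature bound of Lemma~\ref{lemma:S0-E0}; the extra factor of $\theta$ (hence $\theta^2$ in the final ratio) will enter precisely through this substitution. Concretely, I would first reprove Lemma~\ref{lemma:S0-E0} verbatim up to Eq.~\eqref{eq:lower_bound_3_3}, that is, through monotonicity, the superadditivity step, the oblivious-selection ordering $f(\{i\}) \ge f(\{j\})$ for $i \in S_0 \setminus E_0,\ j \in E_1$, and the counting $|S_0 \setminus E_0| \ge (\beta-1)\tau \ge (\beta-1)|E_1|$. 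This yields $f(S \setminus E_S^*) \ge (\beta-1)\check{\nu} \sum_{e_i \in E_1} f(\{e_i\})$, which I abbreviate as $f(S \setminus E_S^*) \ge R \sum_{e_i \in E_1} f(\{e_i\})$ with $R := (\beta-1)\check{\nu}$.

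At the step where the original proof invokes inverse curvature, I would instead use the subadditivity ratio together with the bipartite subadditivity ratio. The subadditivity ratio gives $\sum_{e_i \in E_1} f(\{e_i\}) \ge \nu\, f(E_1)$, while applying the bipartite subadditivity ratio to the partition $S_1 = E_1 \cup (S_1 \setminus E_1)$ gives $f(E_1) + f(S_1 \setminus E_1) \ge \theta f(S_1)$, hence $f(E_1) \ge \theta f(S_1) - f(S_1 \setminus E_1) \ge \theta f(S_1) - f(S \setminus E_S^*)$, the last inequality being monotonicity applied to $S_1 \setminus E_1 \subseteq S \setminus E_S^*$. Chaining these produces the self-referential inequality $f(S \setminus E_S^*) \ge R\nu\big(\theta f(S_1) - f(S \setminus E_S^*)\big)$; solving for $f(S \setminus E_S^*)$ gives the clean bound $f(S \setminus E_S^*) \ge \theta P\, f(S_1)$ with $P = \tfrac{R\nu}{1+R\nu}$, which is the analogue of Lemma~\ref{lemma:S0-E0}. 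Note this argument bypasses Lemma~\ref{lemma:mu} entirely, since it is already self-referential in $f(S \setminus E_S^*)$, and that $P \to 1$ as $\beta \to \infty$, which is exactly what the asymptotic claim needs.

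With this replacement in hand, the remainder follows the main proof. From Lemma~\ref{lemma:second_lower_bound} I have $f(S \setminus E_S^*) \ge \theta f(\opt_{(k-\tau, V \setminus E_S^*)}) - \tfrac{1}{D} f(S_1)$ with $D = 1 - e^{-\gamma \frac{k - |S_0|}{k-\tau}}$ (the $\gamma$ and the ratio $\tfrac{k-|S_0|}{k-\tau}$ coming from Lemma~\ref{lemma:greedy} applied to the greedy stage). Substituting $f(S_1) \le f(S \setminus E_S^*)/(\theta P)$ from the new bound and solving the resulting linear inequality for $f(S \setminus E_S^*)$ yields $f(S \setminus E_S^*) \ge \tfrac{\theta^2 P D}{1 + \theta P D} f(\opt_{(k-\tau, V \setminus E_S^*)})$, the finite-$k$ claim. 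The asymptotic then follows by letting $k \to \infty$: the assumption $\tau = o(k/\beta)$ forces $|S_0|/k = \lceil \beta \tau \rceil / k \to 0$ and $\tau/k \to 0$, so $D \to 1 - e^{-\gamma}$, while $\beta \ge \log k \to \infty$ forces $R \to \infty$ and hence $P \to 1$, leaving $\tfrac{\theta^2 (1-e^{-\gamma})}{1 + \theta(1-e^{-\gamma})} + o(1)$.

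I expect the main obstacle to be the middle step, obtaining the alternative lower bound with the right constant. The delicate points are: (i) bounding $f(S_1 \setminus E_1)$ by $f(S \setminus E_S^*)$ rather than by $f(S_1)$, so that the inequality becomes self-referential in $f(S \setminus E_S^*)$ and can be rearranged instead of being a useless circular statement; and (ii) getting the $\nu$-dependence of $P$ correct, which requires care with the direction of the subadditivity inequalities, and verifying that $P \to 1$ as $\beta$ grows, since this is precisely what reduces the final ratio to $\theta^2(1-e^{-\gamma})/(1+\theta(1-e^{-\gamma}))$. Once Bound~B is established, the balancing that converts the two bounds into the stated ratio is routine algebra.
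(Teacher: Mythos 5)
Your proposal is correct and takes essentially the same route as the paper's proof: the same chain of inequalities (oblivious ordering of singletons, superadditivity, the counting $|S_0\setminus E_0|\geq(\beta-1)\tau$, subadditivity, bipartite subadditivity applied to $S_1 = E_1\cup(S_1\setminus E_1)$, then Lemma~\ref{lemma:greedy} and Lemma~\ref{lemma:second_lower_bound}), with the only difference that you fold the paper's two auxiliary lemmas (Lemmas~\ref{lemma:alternate_mu} and~\ref{lemma:alternate_S0-E0}, which parametrize by $\mu'$ via $f(E_1)=\mu' f(S_1)$ and balance using Lemma~\ref{lemma:max_inc_dec}) into a single self-referential inequality in $f(S\setminus E_S^*)$ and solve it, which is an equivalent repackaging yielding the identical intermediate bound $f(S\setminus E_S^*)\geq\theta P f(S_1)$. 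One remark: your $P=\frac{(\beta-1)\check{\nu}\nu}{1+(\beta-1)\check{\nu}\nu}$ agrees with the $P$ used in the paper's own proof of the corollary, not with the $P$ printed in the corollary statement (whose denominator contains $(1-\nu)$); the latter appears to be a typo in the paper rather than an error on your part.
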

The key observation is that the approximation factor depends on $\nu$ instead of inverse curvature $\check{\alpha}$. The asymptotic approximation ratio is slightly worse here compared to Theorem~\ref{thm:main_thm}. However, depending on the considered application, it might be significantly harder to provide bounds for the inverse curvature than bipartite subadditivty ratio, and hence in such cases, this formulation might be more suitable.  

\section{Applications}
\label{section_applications}
In this section, we consider two important real-world applications where deletion robust optimization is of interest. We show that the parameters used in the statement of our main theoretical result can be explicitly characterized, which implies that the obtained guarantees are applicable.

\subsection{Robust Support Selection}
We first consider the recent results that connect submodularity with concavity~\cite{elenberg2016restricted,khanna2017scalable}. 
In order to obtain bounds for robust support selection for general concave functions, we make use of the theoretical bounds obtained for \RALG~in Corollary~\ref{cor:main_thm_cor2}.

Given a differentiable concave function $l: \mathcal{X} \to \mathbb{R}$, where $\mathcal{X} \subseteq \mathbb{R}^d$ is a convex set, and $k \leq d$, the support selection problem is:
$
	\max_{\|\x \|_0 \leq k} l(\x).
$
As in~\cite{elenberg2016restricted}, we let $\text{supp}(\x) = \lbrace i : x_i \neq 0 \rbrace $, and consider the associated normalized monotone set function
\[
	f(S):=\max_{\textrm{supp}(\x)\subseteq S, \x \in \mathcal{X}} l(\x)-l(\boldsymbol{0}).
\]
Let $T_{l}(\x,\y):= l(\y) - l(\x) - \langle\nabla l(\x), \y - \x \rangle$.
An important result from \cite{elenberg2016restricted} can be rephrased as follows: if $l(\cdot)$ is $L$-smooth and $m$-strongly concave then for all $\x, \y \in \text{dom}(l)$, it holds   
\[
	- \frac{m}{2} \| \y - \x \|^2_2 \geq T_{l}(\x,\y)  \geq - \frac{L}{2} \| \y - \x \|^2_2, 
\] and $f$'s submodularity ratio $\gamma$ is lower bounded by $\frac{m}{L}$. Subsequently, in \cite{khanna2017scalable} it is shown that $\theta$ can also be lower bounded by the same ratio $\frac{m}{L}$. 

In this paper, we consider the robust support selection problem, that is, finding a set of features $S \subseteq [d]$ of size $k$ that is robust against the deletion of limited number of features. More formally, the goal is to maximize the following objective over all $S \subseteq [d]$:
\[
	\min_{|E_S| \leq \tau, E_S \subseteq S} \; \max_{\textrm{supp}(\x)\subseteq S\setminus E_S} l(\x)-l(\boldsymbol{0}).
\]

By inspecting the bound obtained in Corollary~\ref{cor:main_thm_cor2}, it remains to bound the (super/sub)additive ratio $\nu$ and $\check{\nu}$. The first bound follows by combining the result $\gamma \geq \frac{m}{L}$ with Proposition~\ref{prop:const_relation}: $\nu \geq \gamma \geq \frac{m}{L}$. To prove the second bound, we make use of the following result.
\begin{proposition}
\label{prop:support_selection}
The supermodularity ratio $\check{\gamma}$ of the considered objective $f(\cdot)$ can be lower bounded by $\frac{m}{L}$.
\end{proposition}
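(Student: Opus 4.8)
The plan is to bound both the numerator $f(\Omega\mid S)$ and each summand $f(\{i\}\mid S)$ of the denominator in Eq.~\eqref{eq:supermodularity_ratio} by quadratic forms in the gradient of $l$ restricted to $\Omega$, and then take the ratio so that these gradient terms cancel. Fix disjoint $S,\Omega\subseteq[d]$ and let $\x_S$ denote the maximizer defining $f(S)$, i.e.\ $\x_S\in\argmax_{\textrm{supp}(\x)\subseteq S,\,\x\in\mathcal{X}}l(\x)$, so that $f(\{i\}\mid S)=l(\x_{S\cup\{i\}})-l(\x_S)$ and $f(\Omega\mid S)=l(\x_{S\cup\Omega})-l(\x_S)$. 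The key structural fact I would use is the first-order optimality of $\x_S$: the partials $\partial_j l(\x_S)$ vanish for $j\in S$, so that for any direction supported on $S\cup\{i\}$ only the single coordinate $\partial_i l(\x_S)$ contributes to the linear term $\langle\nabla l(\x_S),\cdot\rangle$, and likewise only the $\Omega$-coordinates survive for directions supported on $S\cup\Omega$.

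For the denominator I would use $m$-strong concavity. From $l(\y)-l(\x_S)\le\langle\nabla l(\x_S),\y-\x_S\rangle-\tfrac{m}{2}\|\y-\x_S\|_2^2$, maximizing the right-hand side over all $\y$ supported on $S\cup\{i\}$ (relaxing the constraint $\y\in\mathcal{X}$, which only enlarges the maximum) leaves, after the $S$-components of the gradient are killed by optimality, a one-dimensional quadratic in the $i$-th coordinate whose value is $\tfrac{1}{2m}(\partial_i l(\x_S))^2$. Since $f(\{i\}\mid S)$ is exactly the maximal improvement over feasible $\y$ supported on $S\cup\{i\}$, this gives $f(\{i\}\mid S)\le\tfrac{1}{2m}(\partial_i l(\x_S))^2$, and hence $\sum_{i\in\Omega}f(\{i\}\mid S)\le\tfrac{1}{2m}\sum_{i\in\Omega}(\partial_i l(\x_S))^2$.

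For the numerator I would instead use $L$-smoothness, which requires exhibiting an explicit feasible competitor. Taking $\y=\x_S+\tfrac{1}{L}g$, where $g$ is $\nabla l(\x_S)$ restricted to the coordinates of $\Omega$ and zero elsewhere, the smoothness lower bound $l(\y)-l(\x_S)\ge\langle\nabla l(\x_S),\y-\x_S\rangle-\tfrac{L}{2}\|\y-\x_S\|_2^2$ evaluates, again because only the $\Omega$-coordinates of the gradient survive, to $\tfrac{1}{2L}\sum_{i\in\Omega}(\partial_i l(\x_S))^2$. As $\y$ is supported on $S\cup\Omega$, we have $f(\Omega\mid S)=l(\x_{S\cup\Omega})-l(\x_S)\ge l(\y)-l(\x_S)\ge\tfrac{1}{2L}\sum_{i\in\Omega}(\partial_i l(\x_S))^2$. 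Dividing the numerator bound by the denominator bound, the common gradient sum cancels and leaves $f(\Omega\mid S)/\sum_{i\in\Omega}f(\{i\}\mid S)\ge m/L$; since $S,\Omega$ were arbitrary disjoint sets, $\check{\gamma}\ge m/L$.

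I expect the main obstacle to be the constrained case $\mathcal{X}\subsetneq\mathbb{R}^d$. Note the strong-concavity upper bound is unaffected: there we relax to an unconstrained maximization, so feasibility of the quadratic's maximizer is irrelevant. The difficulty is concentrated in the smoothness lower bound, which needs the specific point $\x_S+\tfrac1L g$ to lie in $\mathcal{X}$, and in justifying the vanishing of $\partial_j l(\x_S)$ on $S$. To make this rigorous under a general convex $\mathcal{X}$ I would replace exact first-order optimality by the stationarity condition $\langle\nabla l(\x_S),\y-\x_S\rangle\le 0$ for feasible $\y$ supported on $S$, and either specialize to $\mathcal{X}=\mathbb{R}^d$ as in the smoothness/strong-concavity setup of~\cite{elenberg2016restricted} or verify feasibility of the competitor directly; the quadratic sandwich itself is otherwise routine.
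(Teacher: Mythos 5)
Your proposal is correct and takes essentially the same route as the paper's proof: upper-bound each $f(\{i\}\mid S)$ by $\tfrac{1}{2m}\bigl(\partial_i l(\x^{(S)})\bigr)^2$ via strong concavity, lower-bound $f(\Omega\mid S)$ by $\tfrac{1}{2L}\norm{\nabla l(\x^{(S)})_{\Omega}}^2$ via smoothness using the explicit competitor $\x^{(S)}+\tfrac{1}{L}\nabla l(\x^{(S)})_{\Omega}$, and divide so the gradient norms cancel. The feasibility and stationarity caveats you flag at the end are also implicitly assumed (and not spelled out) in the paper's own argument, so you are, if anything, slightly more careful.
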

The second bound follows by combining the result in Proposition~\ref{prop:support_selection} and Proposition~\ref{prop:const_relation}: $\check{\nu} \geq \check{\gamma} \geq \frac{m}{L}$.

\subsection{Variance Reduction in Robust Batch Bayesian Optimization}
In batch Bayesian optimization, the goal is to optimize an unknown non-convex function from \emph{costly} concurrent function evaluations \cite{desautels2014parallelizing, gonzalez2016batch, azimi2012hybrid}. Most often, the concurrent evaluations correspond to running an expensive batch of experiments. In the case where experiments can fail, it is beneficial to select a set of experiments in a robust way.

Different acquisition (i.e.~auxiliary) functions have been proposed to evaluate the utility of candidate points for the next evaluations of the unknown function \cite{shahriari2016taking}. Recently in~\cite{bogunovic2016truncated}, the \emph{variance reduction} objective was used as the acquisition function -- the unknown function is evaluated at the points that maximally reduce variance of the posterior distribution over the given set of points that represent \emph{potential maximizers}. We formalize this as follows.

\textbf{Setup.} Let $f(\x)$ be an unknown function defined over a finite domain $\mathcal{X} = \lbrace \x_1, \cdots, \x_n \rbrace$, where $\x_i  \in \mathbb{R}^d$. Once we evaluate the function at some point $\x_i \in \mathcal{X}$, we receive a noisy observation $y_i = f(\x_i) + z$, where $z \sim \mathcal{N}(0, \sigma^2)$. In Bayesian optimization, $f$ is modeled as a sample from a Gaussian process. We use a Gaussian process with zero mean and kernel function $k(\x, \x')$, i.e. $f \sim \text{GP}(\textbf{0}, k(\x,\x'))$. Let $S = \lbrace e_1, \cdots, e_{|S|} \rbrace \subseteq [n]$ denote the set of points, and $\X_S:= [\x_{e_1}, \cdots, \x_{e_{|S|}}] \in \mathbb{R}^{|S| \times d}$ and $\y_S := [y_1, \cdots, y_{|S|}]$ denote the corresponding data matrix and observations, respectively. The posterior distribution of $f$ given the points $\X_S$ and observations $\y_S$ is again a GP, with the posterior variance given by:
\begin{align*}
	\sigma^2_{\x|S} = k(\x,\x) - k(\x,\X_{S})\left( k(\X_S, \X_S) + \sigma^2 \I_{|S|}  \right)^{-1} \\
	\times k(\X_S, \x).
\end{align*}
For a given set of potential maximizers $M \subseteq [n]$, the variance reduction objective is defined as follows:
\begin{equation}
	\label{eq:var_red}
	F_M(S) := \sum_{\x \in X_{M}} \sigma^2_\x - \sigma^2_{\x|S}, 
\end{equation} where $\sigma^2_\x = k(\x, \x)$. We show in Appendix~\ref{non-submodularity of vr} that this objective is \emph{not} submodular in general. 

Finally, our goal is to find a set of points $S$ of size $k$ that maximizes 
\[
	\min_{|E_S| \leq \tau, E_S \subseteq S} \; \sum_{\x \in X_{M}} \sigma^2_\x - \sigma^2_{\x|S\setminus E_S}.
\]
\par

In Appendix~\ref{vr}, we briefly discuss the relevant set function parameters for this objective function. We also refer the interested reader to~\cite{Halabi2019} where these are examined under further structural assumptions.

\section{Experimental Results}

\textbf{Optimization performance.}~For a returned set $S$, we measure the performance in terms of $\min_{E \subseteq S, |E| \leq \tau} f(S \setminus E)$. Note that $f(S \setminus E)$ is a submodular function in $E$. Finding the minimizer $E$ s.t. $|E| \leq \tau$ is NP-hard even to approximate~\cite{svitkina2011submodular}. We rely on the following methods in order to find $E$ of size $\tau$ that degrades the solution as much as possible:

-- Greedy adversaries: (i) \emph{Greedy Min --}~iteratively removes elements to reduce the objective value $f(S \setminus E)$ as much as possible, and (ii) \emph{Greedy Max --}~iteratively adds elements from $S$ to maximize the objective $f(E)$.

-- Random Greedy adversaries:\footnote{The random adversaries are inspired by \cite{buchbinder2014submodular} and \cite{mirzasoleiman2015}.}~In order to introduce randomness in the removal process we consider (iii) \emph{Random Greedy Min --} iteratively selects a random element from the top $\tau$ elements whose marginal gains are the highest in terms of reducing the objective value $f(S \setminus E)$ and 
(iv) \emph{Stochastic Greedy Min --} iteratively selects an element, from a random set $R \subseteq V$, with the highest marginal gain in terms of reducing $f(S \setminus E)$. At every step, $R$ is obtained by subsampling $(|S|/\tau) \log (1/\epsilon)$ elements from $S$. 

The minimum objective value $f(S \setminus E)$ among all obtained sets $E$ is reported. 
Most of the time, for all the considered algorithms, \emph{Greedy Min} finds $E$ that reduces utility the most.
\begin{figure}[h]
\vspace{-3mm}
\begin{tabular}{cccc}
\subfloat[Lin.~reg.~($\tau=10$)]{\includegraphics[width=3.8cm,height=3.2cm]{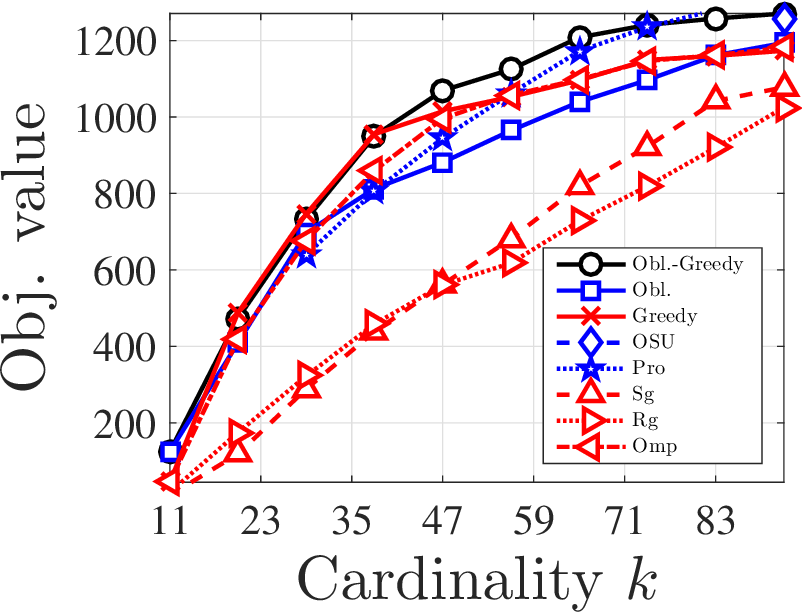}} &
\subfloat[Lin.~reg.~($\tau=30$)]{\includegraphics[width=3.8cm,height=3.2cm]{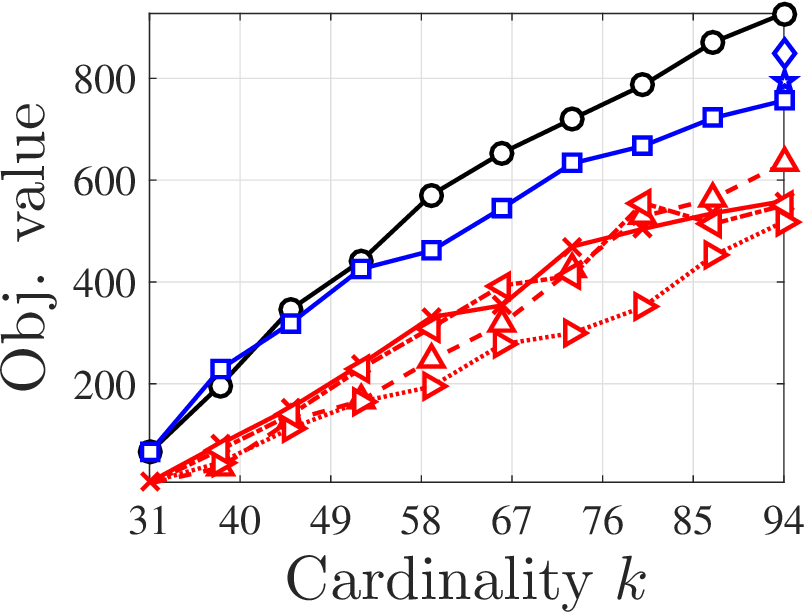}} & \\
\subfloat[Lin.~reg.~($\tau=10$)]{\includegraphics[width=3.8cm,height=3.2cm]{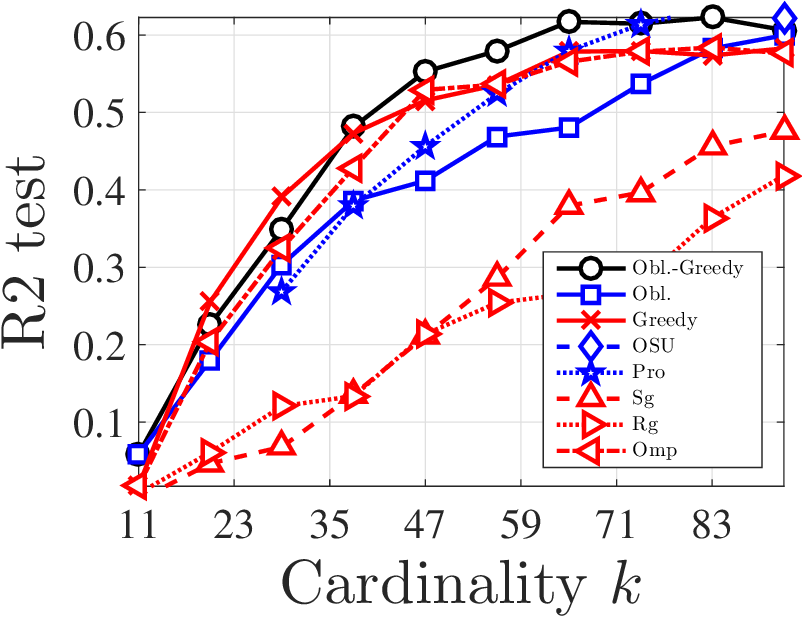}} &
\subfloat[Lin.~reg.~($\tau=30$)]{\includegraphics[width=3.8cm,height=3.2cm]{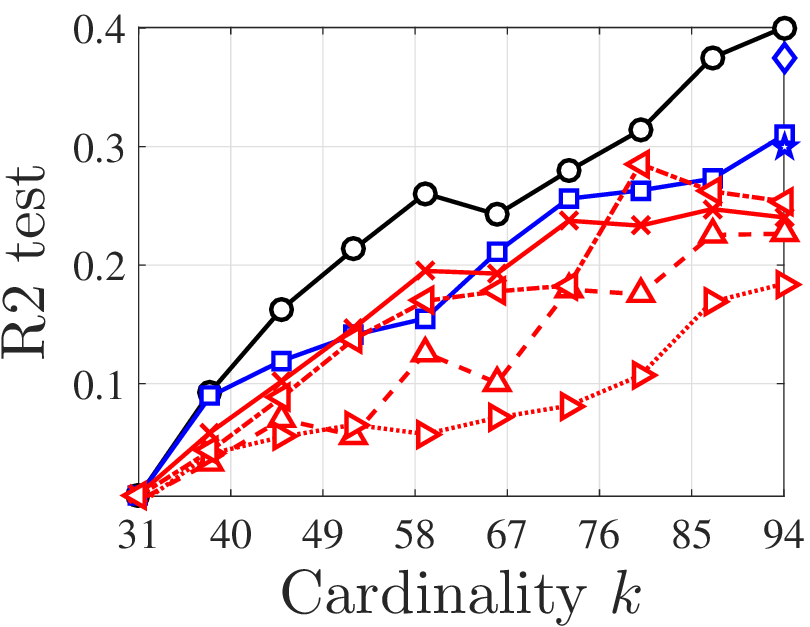}}
\end{tabular}
  \caption{Comparison of the algorithms on the linear regression task.}
\label{fig:lin_syn_fig}
\vspace{-6mm}
\end{figure}
\subsection{Robust Support Selection}
\begin{figure}[h]
\begin{tabular}{cccc}
\subfloat[Log.~synthetic ($\tau=10$)]{\includegraphics[width=3.8cm,height=3.2cm]{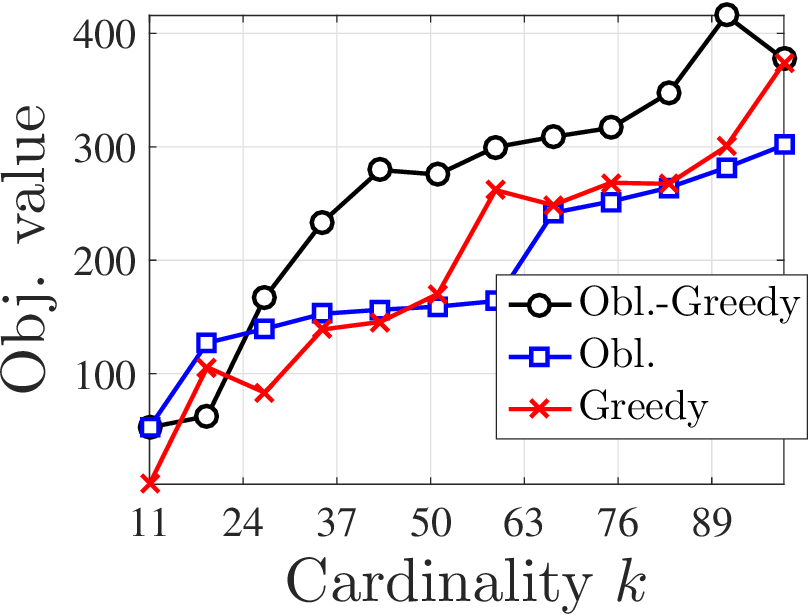}} &
\subfloat[Log.~synthetic ($\tau=30$)]{\includegraphics	[width=3.8cm,height=3.2cm]{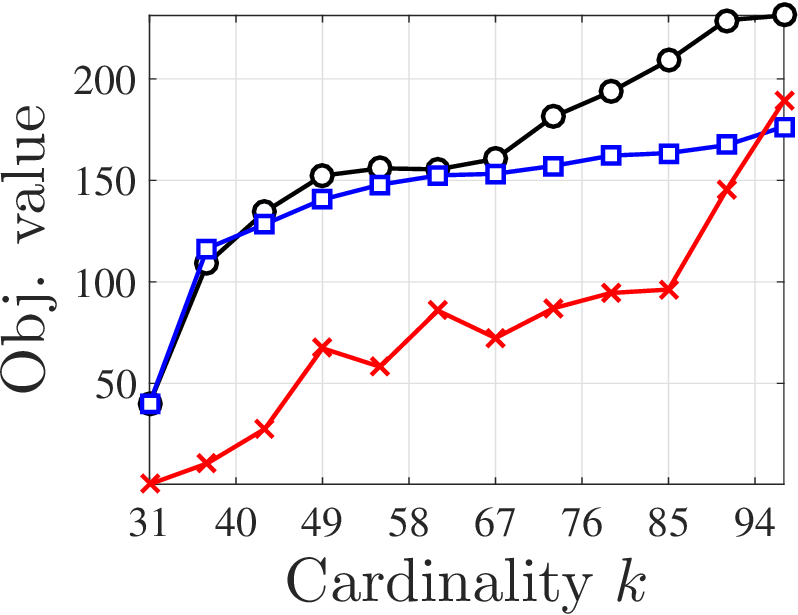}} & \\
\subfloat[Log.~synthetic ($\tau=10$)]{\includegraphics[width=3.8cm,height=3.2cm]{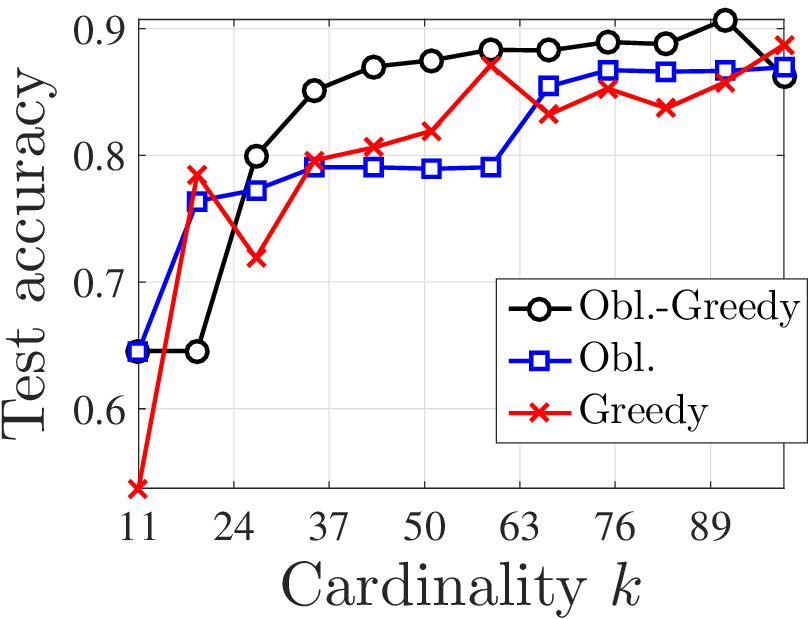}} &
\subfloat[Log.~synthetic ($\tau=30$)]{\includegraphics[width=3.8cm,height=3.2cm]{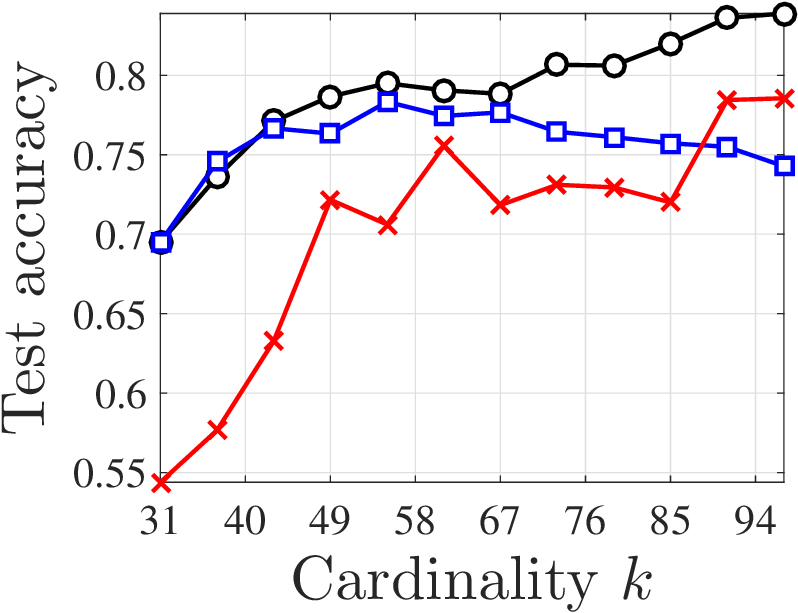}}
\end{tabular}
  \caption{Logistic regression task with synth. dataset.}
\label{fig:log_syn_fig}
\vspace{-6mm}
\end{figure}
\textbf{Linear Regression.}~Our setup is similar to the one in~\cite{khanna2017scalable}. Each row of the design matrix $\X \in \mathbb{R}^{n \times d}$ is generated by an autoregressive process,
\begin{equation}\label{eq:autoregressive}
	X_{i, t+1} = \sqrt{1 - \alpha^2} X_{i, t} + \alpha \epsilon_{i,t}, 
\end{equation}
where $\epsilon_{i,t}$ is i.i.d.~standard Gaussian with variance $\alpha^2=0.5$. We use $n=800$ training data points and $d=1000$. An additional $2400$ points are used for testing. We generate a $100$-sparse regression vector by selecting random entries of $\bomega$ and set them
$
	\bomega_s = (-1)^{\textrm{Bern}(1/2)}\times \left(5\sqrt{\frac{\log d}{n}}+\delta_s\right), 
$
where $\delta_s$ is a standard i.i.d.~Gaussian noise. The target is given by $\y=\X\bomega+\z$, where $\forall i\in [n],~z_i\sim \mathcal{N}(0, 5)$. We compare the performance of $\RALG$ against: (i) robust algorithms (in blue) such as $\Obl$, $\PRO$~\cite{bogunovic2017robust}, $\ORLIN$~\cite{orlin2016robust}, (ii) greedy-type algorithms (in red) such as $\GREEDY$, $\SGREEDY$~\cite{mirzasoleiman2015}, $\RGREEDY$~\cite{buchbinder2014submodular}, $\OMP$. We require $\beta > 1$ for our asymptotic results to hold, but we found out that in practice (small $k$ regime) $\beta \leq 1$ usually gives the best performance. We use \RALG~with $\beta=1$ unless stated otherwise.

The results are shown in Fig.~\ref{fig:lin_syn_fig}.~Since $\PRO$ and $\ORLIN$ only make sense in the regime where $\tau$ is relatively small, the plots show their performance only for feasible values of $k$. It can be observed that $\RALG$ achieves the best performance among all the methods in terms of both training error and test score. Also, the greedy-type algorithms become less robust for larger values of $\tau$.

\textbf{Logistic Regression.}~We compare the performance of $\RALG$ vs.~$\GREEDY$ and $\Obl$ selection on both synthetic and real-world data.

\textit{-- Synthetic data:} We generate a $100$-sparse $\bomega$ by letting $\bomega_s = (-1)^{\textrm{Bern}(1/2)}\times \delta_s$,
with $\delta_s\sim \textrm{Unif}([-1,1])$. The design matrix $\X$ is generated as in~\eqref{eq:autoregressive}, with $\alpha^2=0.09$.~We set $d = 200$,and use $n=600$ points for training and additional $1800$ points for testing. The label of the $i$-th data point $\X_{(i,\cdot)}$ is set to $1$ if $1/(1+\exp(\X_{(i,\cdot)}\bbeta))>0.5$ and $0$ otherwise. The results are shown in Fig.~\ref{fig:log_syn_fig}. We can observe that $\RALG$ outperforms other methods both in terms of the achieved objective value and generalization error. We also note that the performance of $\GREEDY$ decays significantly when $\tau$ increases.

\textit{-- MNIST:} We consider the $10$-class logistic regression task on the MNIST~\cite{lecun1998gradient} dataset. In this experiment, we set $\beta=0.5$ in $\RALG$, and we sample $200$ images for each digit for the training phase and $100$ images of each for testing. The results are shown in Fig.~\ref{fig:log_mnist_fig}. It can be observed that $\RALG$ has a distinctive advantage over $\GREEDY$ and $\Obl$, while when $\tau$ increases the performance of $\GREEDY$ decays significantly and more robust $\Obl$ starts to outperform it.
\begin{figure}[h]
\vspace{-3mm}
\begin{tabular}{cccc}
\subfloat[MNIST ($\tau=15$)]{\includegraphics[width=3.8cm,height=3.2cm]{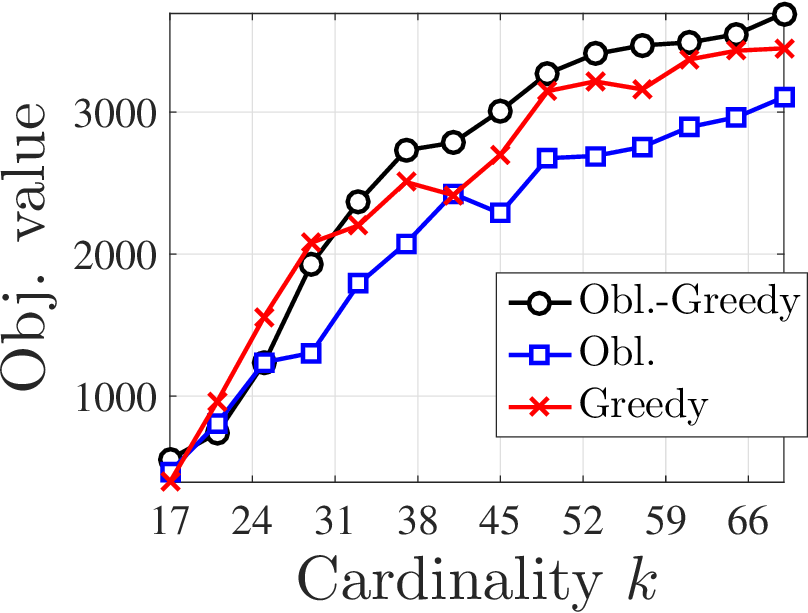}} &
\subfloat[MNIST ($\tau=45$)]{\includegraphics[width=3.8cm,height=3.2cm]{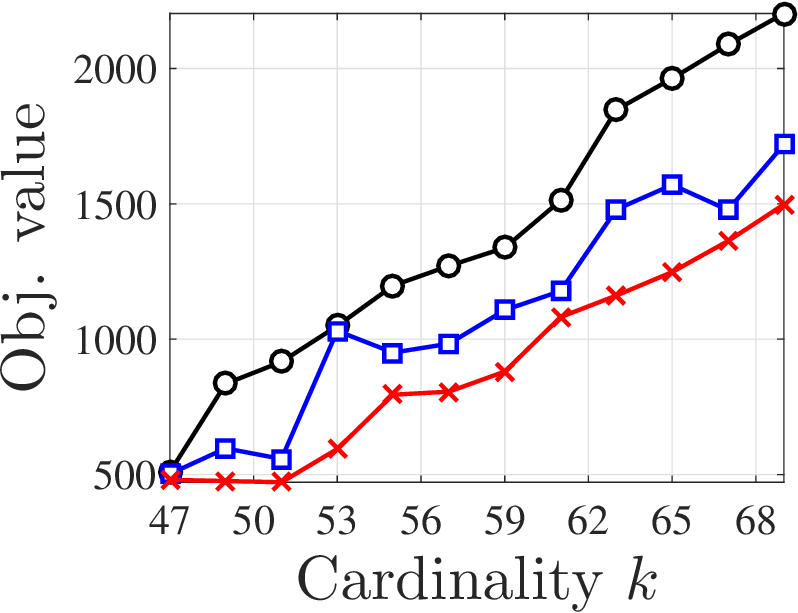}} & \\
\subfloat[MNIST ($\tau=15$)]{\includegraphics[width=3.8cm,height=3.2cm]{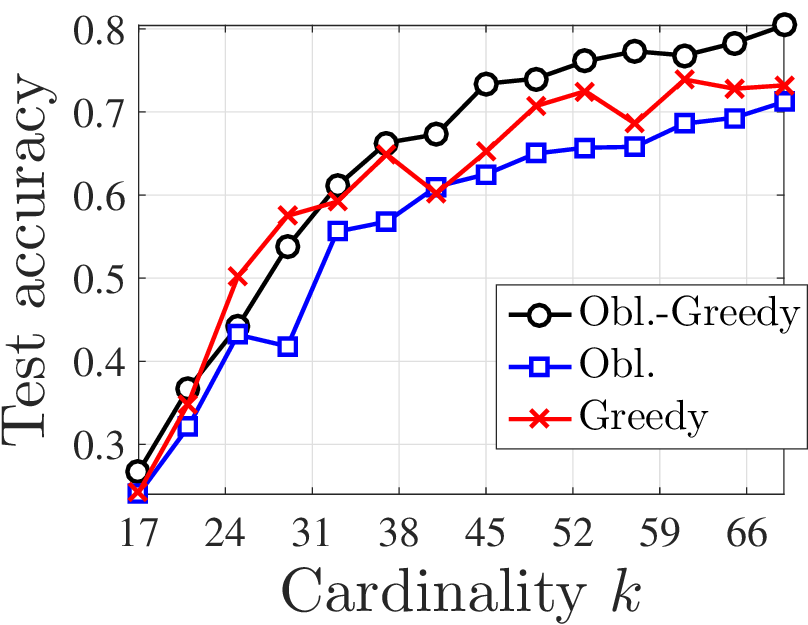}} &
\subfloat[MNIST ($\tau=45$)]{\includegraphics[width=3.8cm,height=3.2cm]{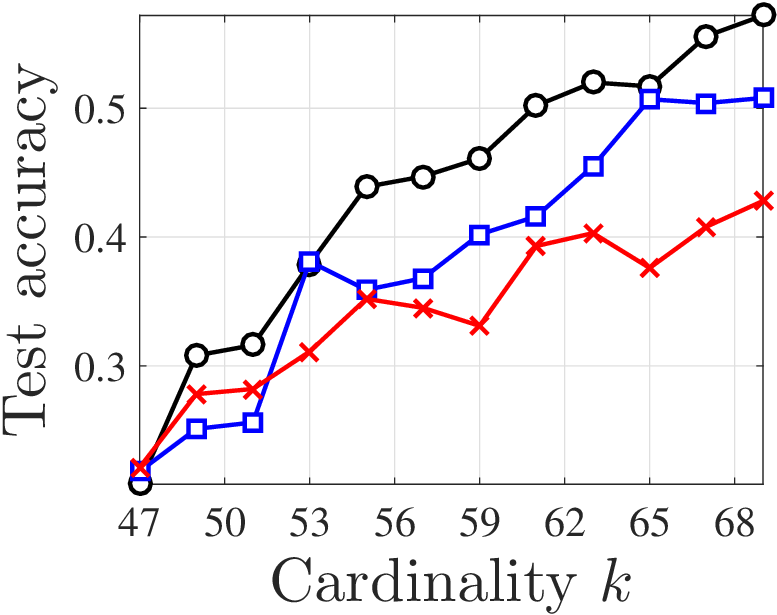}}
\end{tabular}
  \caption{Logistic regression with MNIST dataset.}
\label{fig:log_mnist_fig}
\vspace{-5mm}
\end{figure}

\subsection{Robust Batch Bayesian Optimization via Variance Reduction}
\textbf{Setup.}
We conducted the following synthetic experiment. A design matrix $X$ of size $600 \times 20$ is obtained via the autoregressive process from~\eqref{eq:autoregressive}. The function values at these points are generated from a GP with $3/2$-M\'atern kernel~\cite{rasmussen2006gaussian} with both lengthscale and output variance set to $1.0$. The samples of this function are
corrupted by Gaussian noise, $\sigma^2 = 1.0$. Objective function used is the variance reduction (Eq.~\eqref{eq:var_red}). Finally, half of the points randomly chosen are selected in the set $M$, while the other half is used in the selection process. We use $\beta = 0.5$ in our algorithm.
\begin{figure}[t!]
\vspace{-3mm}
\begin{tabular}{cccc}
\subfloat[$\alpha=0.05, \tau = 50$]{\includegraphics[width=3.8cm,height=3.2cm]{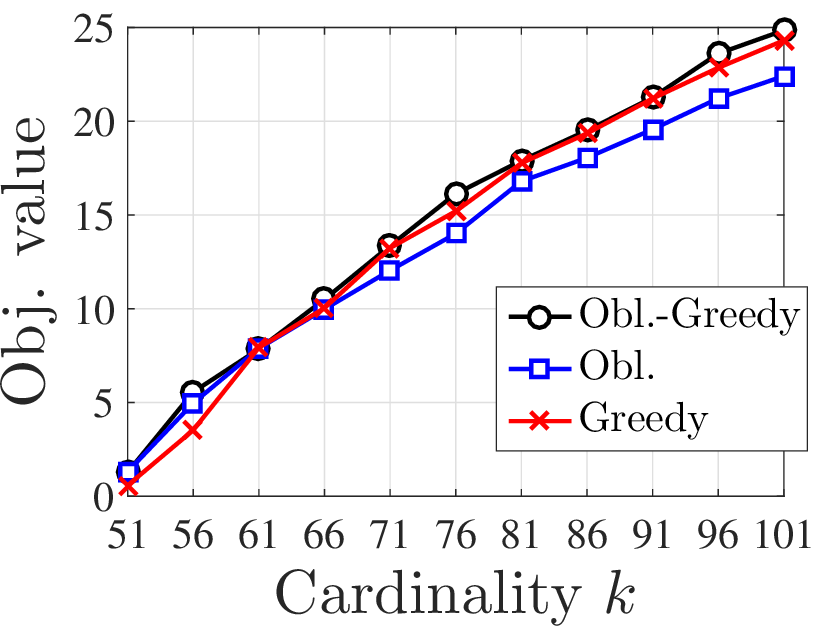}} &
\subfloat[$\alpha=0.1, \tau = 50$]{\includegraphics[width=3.8cm,height=3.2cm]{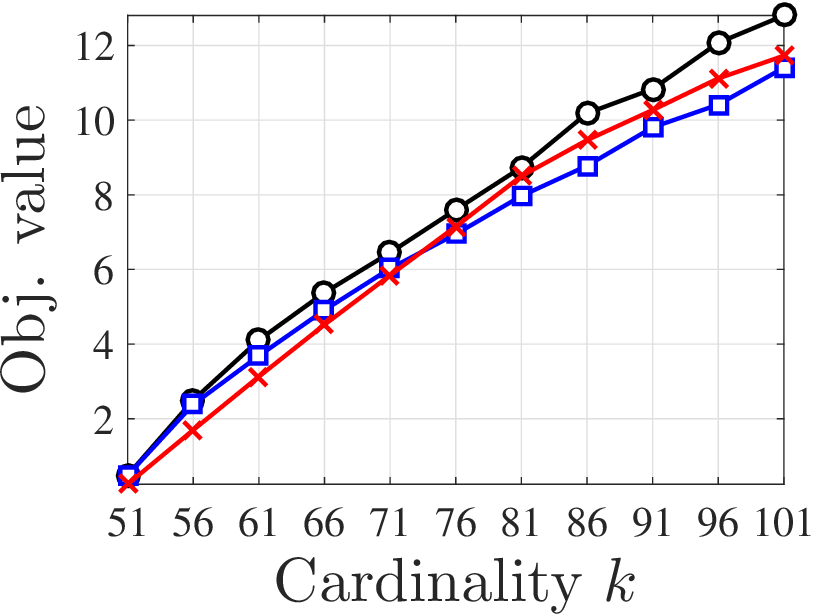}} & \\
\subfloat[$\alpha=0.2, \tau = 50$]{\includegraphics[width=3.8cm,height=3.2cm]{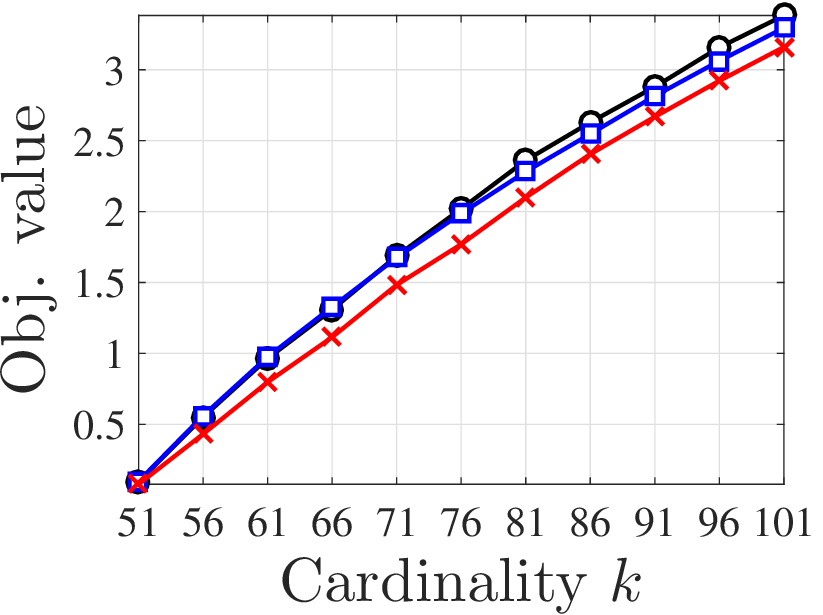}} &
\subfloat[$\alpha=0.1, k = 100$]{\includegraphics[width=3.8cm,height=3.2cm]{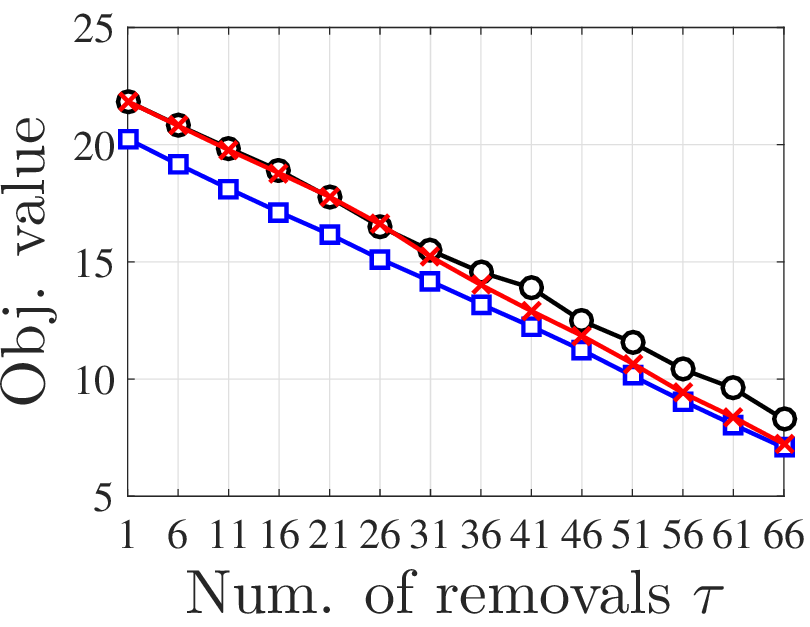}}
\end{tabular}
  \caption{Comparison of the algorithms on the variance reduction task.}
\label{fig:var_red_fig}
\vspace{-6mm}
\end{figure}

\textbf{Results.}
In Figure~\ref{fig:var_red_fig}~(a), (b), (c), the performance of all three algorithms is shown when $\tau$ is fixed to $50$. Different figures correspond to different $\alpha$ values. We observe that when $\alpha = 0.1$, \GREEDY~outperforms \Obl~for most values of $k$, while \Obl~clearly outperforms \GREEDY~when $\alpha=0.2$. For all presented values of $\alpha$, \RALG~outperforms both \GREEDY~and \Obl~selection. For larger values of $\alpha$, the correlation between the points becomes small and consequently so do the objective values. In such cases, all three algorithms perform similarly. In Figure~\ref{fig:var_red_fig}~(d), we show how the performance of all three algorithms decreases as the number of removals increases. When the number of removals is small both $\GREEDY$ and our algorithm perform similarly, while as the number of removals increases the performance of $\GREEDY$ drops more rapidly. 

\section{Conclusion}
We have presented a new algorithm \RALG~that achieves constant-factor approximation guarantees for the robust maximization of monotone non-submodular objectives. The theoretical guarantees hold for general $\tau = c k$ for some $c \in (0,1)$, which resolves the important question posed in~\cite{orlin2016robust,bogunovic2017robust}. We have also obtained the first robust guarantees for support selection and variance reduction objectives. 
In various experiments, we have demonstrated the robust performance of \RALG~by showing that it outperforms both \Obl~selection and \GREEDY, and hence achieves the best of both worlds.

\subsubsection*{Acknowledgement}
The authors would like to thank Jonathan Scarlett and Slobodan Mitrovi\'c for useful discussions. 
This work was done during JZ's summer internship at LIONS, EPFL.
IB and VC’s work was supported in part by the European Research Council (ERC) under the European Union’s Horizon 2020 research and innovation program (grant agreement
number 725594), in part by the Swiss National Science Foundation (SNF), project $407540\_167319/1$, in part by the NCCR MARVEL, funded by the Swiss National Science Foundation.

The previous version of this paper contained a proposition in Section 4.2 with bounds for inverse curvature and curvature parameters for the variance reduction objective. Its proof contained an error and is thus omitted from this paper version. We would like to thank Marwa el Halabi from MIT for pointing this out.\looseness=-1
\bibliographystyle{ieeetr}  
\bibliography{cite}

\appendix
\onecolumn
\newpage
{\Large \bf Appendix}

\begin{center}
{\large \bf Robust Maximization of Non-Submodular Objectives} \\(Ilija Bogunovic\textsuperscript{\dag}, Junyao Zhao\textsuperscript{\dag} and Volkan Cevher, AISTATS 2018)
\end{center}
\section{Organization of the Appendix}
-- Appendix~\ref{appendix_1}: Proofs from Section~\ref{set_ratios_section} \\
-- Appendix~\ref{appendix_2}: Proofs of the Main Result (Section~\ref{section_algorithm_and_guarantees}) \\
-- Appendix~\ref{appendix_3}: Proofs from Section~\ref{section_applications} \\
-- Appendix~\ref{additional_experiments}: Additional experiments

\section{Proofs from Section~\ref{set_ratios_section}}
	\label{appendix_1}
	\subsection{Proof of Proposition~\ref{prop:const_relation}}
	\begin{proof}
		We prove the following relations:
		\begin{itemize}
			\item $\boldsymbol{\nu \geq \gamma}$, $\boldsymbol{\check{\nu} \geq \check{\gamma}}$:\\
				By setting $S=\emptyset$ in both Eq.~\eqref{eq:submodularity_ratio}
				and Eq.~\eqref{eq:supermodularity_ratio}, we obtain $\forall S \subseteq V$:
				\begin{equation}
						\label{eq:nu_greater_gamma}
						\sum_{i \in S} f(\lbrace i \rbrace)	\geq \gamma f(S),
				\end{equation} and
				\begin{equation}\label{eq:c_nu_geq_c_gamma}
					f(S) \ge \check{\gamma} \sum_{i\in S} f(\{i\}).
				\end{equation}
				The result follows since, by definition of $\nu$ and $\check{\nu}$, they are the largest scalars such that Eq.~\eqref{eq:nu_greater_gamma} and Eq.~\eqref{eq:c_nu_geq_c_gamma} hold, respectively.

		\item $\boldsymbol{\gamma \geq 1 - \check{\alpha}}$, $\boldsymbol{\check{\gamma} \geq 1 - \alpha}$: \\
		Let $S, \Omega \subseteq V$ be two arbitrary disjoint sets. We arbitrarily order elements of $\Omega = \lbrace e_1,\cdots, e_{|\Omega|} \rbrace$ and we let $\Omega_{j-1}$ denote the first $j-1$ elements of $\Omega$. We also let $\Omega_0$ be an empty set. 


		By the definition of $\check{\alpha}$ (see Eq.~\eqref{eq:inverse_curvature}) we have:
		\begin{align}
			\sum_{j = 1}^{| \Omega |} f \left(\lbrace e_j  \rbrace | S \right)
			&= \sum_{ j = 1}^{| \Omega |} f \left(\lbrace e_j  \rbrace | S \cup \lbrace e_j \rbrace \setminus \lbrace e_j\rbrace \right) \nonumber  \\
			& \geq \sum_{ j = 1}^{| \Omega |} (1 - \check{\alpha}) f \left(\lbrace e_j \rbrace | S \cup \lbrace e_j \rbrace \setminus \lbrace e_j \rbrace \cup \Omega_{j-1} \right) \nonumber\\
			& =  (1 - \check{\alpha}) f \left( \Omega | S \right) \label{eq:final_alpha_check_gamma},
		\end{align}	
		where the last equality is obtained via telescoping sums. 
		
		Similarly, by the definition of $\alpha$ (see Eq.~\eqref{eq:curvature}) we have:
		\begin{align}
			(1 - \alpha) \sum_{j = 1}^{| \Omega |} f \left(\lbrace e_j  \rbrace | S \right)
			&= \sum_{ j = 1}^{| \Omega |} (1 - \alpha) f \left(\lbrace e_j  \rbrace | S \cup \lbrace e_j \rbrace \setminus \lbrace e_j\rbrace \right) \nonumber  \\
			& \le \sum_{ j = 1}^{| \Omega |} f \left(\lbrace e_j \rbrace | S \cup \lbrace e_j \rbrace \setminus \lbrace e_j \rbrace \cup \Omega_{j-1} \right) \nonumber \\
			& =  f \left( \Omega | S \right) \label{eq:final_alpha_gamma_check}.
		\end{align}

		
		
		
	  Because $S$ and $\Omega$ are arbitrary disjoint sets, and both $\gamma$ and $\check{\gamma}$ are the largest scalars such that for all disjoint sets $S,\Omega\subseteq V$ the following holds
	  $\sum_{j = 1}^{| \Omega|} f(\lbrace e_j \rbrace | S) \geq \gamma f(\Omega | S)$ and
	  $\check{\gamma} \sum_{j = 1}^{| \Omega|} f \left(\lbrace e_j  \rbrace | S \right) \le f \left( \Omega | S \right)$, it follows from Eq.~\eqref{eq:final_alpha_check_gamma} and Eq.~\eqref{eq:final_alpha_gamma_check}, respectively, that $\gamma \geq 1 - \check{\alpha}$ and $\check{\gamma} \geq 1 - \alpha$.
	\end{itemize}
	\end{proof}

	\subsection{Proof of Remark~\ref{theta_nu_inv_nu}}
	\begin{proof}
		Consider any set $S \subseteq V$, and $A$ and $B$ such that $A \cup B = S$, $A \cap B = \emptyset$.
		We have
		\[ \frac{f(A) + f(B)}{f(S)} \geq  \frac{\check{\nu} \sum_{i \in A} f(\lbrace i \rbrace) + \check{\nu} \sum_{i \in B} f(\lbrace i \rbrace)}{f(S)} = \frac{\check{\nu} \sum_{i \in S} f(\lbrace i \rbrace)}{f(S)} \geq \nu \check{\nu}, \]
where the first and second inequality follow by the definition of $\nu$ and $\check{\nu}$ (Eq.~\eqref{eq:subadditivity} and Eq.~\eqref{eq:superadditivity}), respectively.
By the definition (see Eq.~\eqref{eq:2_separate_sa}), $\theta$ is the largest scalar such that $f(A) + f(B) \geq \theta f(S)$ holds, hence, it follows $\theta \geq \nu \check{\nu}$.
	\end{proof}

\section{Proofs of the Main Result (Section~\ref{section_algorithm_and_guarantees})}
\label{appendix_2}
\subsection{Proof of Lemma~\ref{lemma:mu}}
We reproduce the proof from~\cite{bogunovic2017robust} for the sake of completeness.
\begin{proof}
\begin{align}
    f(S \setminus E_S^*)
    &= f(S) - f(S) + f(S \setminus E_S^*) \nonumber \\
    &= f(S_0 \cup S_1) + f(S\setminus E_0) - f(S \setminus E_0) - f(S)  + f(S \setminus E_S^*) \nonumber\\
    &= f(S_1) + f(S_0\ |\ S_1) + f(S\setminus E_0) - f(S) - f(S \setminus E_0) + f(S \setminus E_S^*)  \nonumber \\
    &= f(S_1) + f(S_0\ |\ (S \setminus S_0)) + f(S\setminus E_0) - f(E_0 \cup (S \setminus E_0)) - f(S \setminus E_0) + f(S \setminus E_S^*)  \nonumber \\
    &= f(S_1) + f(S_0\ |\ (S \setminus S_0)) - f(E_0\ |\ (S \setminus E_0)) - f(S\setminus E_0) + f(S \setminus E_S^*) \nonumber \\
    & =  f(S_1) + f(S_0\ |\ (S \setminus S_0)) - f(E_0\ |\ (S \setminus E_0)) - f(E_1 \cup (S \setminus E_S^*)) + f(S \setminus E_S^*)  \nonumber\\
    & =  f(S_1) + f(S_0\ |\ (S \setminus S_0)) - f(E_0\ |\ (S \setminus E_0)) - f(E_1\ |\ S \setminus E_S^*) \nonumber \\
    & =  f(S_1) - f(E_1\ |\ S \setminus E_S^*) + f(S_0\ |\ (S \setminus S_0)) - f(E_0\ |\ (S \setminus E_0)) \nonumber \\
    & \ge  (1 - \mu) f(S_1), \label{eq:fSE_pf_9} 
\end{align}
where we used $S = S_0 \cup S_1$, $E^*_S = E_0 \cup E_1$. and \eqref{eq:fSE_pf_9} follows from monotonicity, i.e., $f(S_0\ |\ (S \setminus S_0)) - f(E_0\ |\ (S \setminus E_0)) \geq 0$ (due to $E_0 \subseteq S_0$ and $S \setminus S_0 \subseteq S \setminus E_0$), along with the definition of $\mu$.
\end{proof}

\subsection{Proof of Lemma~\ref{lemma:second_lower_bound}}

\begin{proof}
We start by defining $S_0' := \opt_{(k - \tau, V \setminus E_0)} \cap (S_0 \setminus E_0)$ and $X := \opt_{(k - \tau, V \setminus E_0)} \setminus S_0'$.
\begin{align}
    f(S_0 \setminus E_0) + f(\opt_{(k - \tau, V \setminus S_0)}) 
    & \ge  f(S_0') + f(X) \label{eq:bound1_1} \\
    & \ge  \theta f(\opt_{(k - \tau, V \setminus E_0)}) \label{eq:bound1_2} \\
    & \ge  \theta f(\opt_{(k - \tau, V \setminus E_S^*)}),  
\end{align}
where \eqref{eq:bound1_1} follows from monotonicity as $S_0'  \subseteq (S_0 \setminus E_0) $ and $(V \setminus S_0) \subseteq (V \setminus E_0)$. Eq.~\eqref{eq:bound1_2}
follows from the fact that $\opt_{(k - \tau, V \setminus E_0)} = S_0' \cup X$ and the bipartite subadditive property~\eqref{eq:2_separate_sa}. The final equation follows from the definition of the optimal solution and the fact that $E^*_S = E_0 \cup E_1$.

By rearranging and noting that $f(S \setminus E^*_S) \geq f(S_0 \setminus E_0)$ due to $(S_0 \setminus E_0) \subseteq (S \setminus E^*_S)$ and monotonicity, we obtain
\[
	f(S \setminus E^*_S) \geq \theta f(\opt_{(k - \tau, V \setminus E_S^*)})  - f(\opt_{(k - \tau, V \setminus S_0)}).
\]
\end{proof}

\subsection{Proof of Theorem~\ref{thm:main_thm}}
\label{main_proof_section}
Before proving the theorem we outline the following auxiliary lemma:
\begin{lemma}[Lemma D.2 in \cite{bogunovic2017robust}]\label{lemma:max_inc_dec}
For any set function $f$, sets $A,B$, and constant $\alpha>0$, we have
\begin{equation}
\max\{\alpha f(A), \beta f(B)-f(A)\}\ge \left(\frac{\alpha}{1+\alpha}\right) \beta f(B).
\end{equation}
\end{lemma}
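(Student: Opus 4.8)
The plan is to strip the set-function dressing immediately: write $a := f(A)$ and $b := f(B)$, so the claim becomes the purely real-variable inequality $\max\{\alpha a,\ \beta b - a\} \ge \frac{\alpha}{1+\alpha}\beta b$. The two arguments of the maximum are affine in $a$ with opposite monotonicity---$\alpha a$ increases in $a$ while $\beta b - a$ decreases in $a$---and it is exactly this opposing monotonicity that lets the maximum be bounded below by a quantity free of $a$.

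The key step I would carry out is to exhibit a convex combination of the two arguments that collapses to the right-hand side. Taking the weights $\frac{1}{1+\alpha}$ and $\frac{\alpha}{1+\alpha}$, which are strictly positive since $\alpha>0$ and sum to $1$, one computes
\[
\frac{1}{1+\alpha}\cdot \alpha a + \frac{\alpha}{1+\alpha}\cdot (\beta b - a) = \frac{\alpha a + \alpha\beta b - \alpha a}{1+\alpha} = \frac{\alpha}{1+\alpha}\beta b,
\]
where the $a$-dependent terms cancel by design. Since the maximum of two reals is at least any convex combination of them, the left-hand side of the lemma dominates this weighted average, which is precisely the asserted bound; this would close the proof in a single line.

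Two remarks on this route. First, it needs no nonnegativity or monotonicity hypothesis on $f$ at all: $a$ and $b$ may be arbitrary reals, because $\max\{x,y\}\ge w x + (1-w) y$ holds for every $x,y\in\mathbb{R}$ and $w\in[0,1]$. Second, there is essentially no obstacle here---the only ingredient demanding any thought is guessing the correct weights, and these are forced upon us by insisting that the $a$-terms cancel. If one prefers to avoid the convex-combination trick, an equally short case split works: either $a \ge \frac{\beta b}{1+\alpha}$, in which case $\alpha a \ge \frac{\alpha}{1+\alpha}\beta b$, or $a \le \frac{\beta b}{1+\alpha}$, in which case $\beta b - a \ge \beta b - \frac{\beta b}{1+\alpha} = \frac{\alpha}{1+\alpha}\beta b$; in both cases the maximum dominates the target.
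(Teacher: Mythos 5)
Your proof is correct. The paper itself does not prove this lemma---it imports it verbatim as Lemma D.2 from \cite{bogunovic2017robust}---and your convex-combination argument (equivalently, the case split on whether $f(A)\ge \beta f(B)/(1+\alpha)$) is the standard one-line proof of that statement; your observation that the weights are forced by cancelling the $f(A)$-terms, and that no nonnegativity or monotonicity of $f$ is needed, is also accurate.
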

\par
Next, we prove the main theorem.
\begin{proof}

First we note that $\beta$ should be chosen such that the following condition holds $|S_0| = \lceil \beta \tau \rceil \leq k$. When $\tau = \ceil{ck}$ for $c \in (0,1)$ and $k \to \infty$ the condition $\beta < \frac{1}{c}$ suffices. 

We consider two cases, when $\mu=0$ and $\mu \neq 0$.
When $\mu = 0$, from Lemma~\ref{lemma:mu} we have \begin{equation}\label{eq:s_1_mu_0} f(S\setminus E_S^*) \geq f(S_1) \end{equation}
On the other hand, when $\mu\neq 0$, by Lemma \ref{lemma:mu} and \ref{lemma:S0-E0} we have 
\begin{align}
f(S\setminus E_S^*) &\ge \max\{(1-\mu)f(S_1), (\beta-1)\check{\nu}(1-\check{\alpha})\mu f(S_1)\} \nonumber \\
&\ge \frac{(\beta-1)\check{\nu}(1-\check{\alpha})}{1+(\beta-1)\check{\nu}(1-\check{\alpha})} f(S_1) \label{eq:new_mu_S1}.
\end{align}
By denoting $P := \frac{(\beta-1)\check{\nu}(1-\check{\alpha})}{1+(\beta-1)\check{\nu}(1-\check{\alpha})}$ we observe that $P \in [0,1)$ once $\beta \geq 1$. Hence, by setting $\beta \geq 1$ and taking the minimum between two bounds in Eq.~\eqref{eq:new_mu_S1} and Eq.~\eqref{eq:s_1_mu_0} we conclude that Eq.~\eqref{eq:new_mu_S1} holds for any $\mu \in [0,1]$.

By combining Eq.~\eqref{eq:new_mu_S1} with Lemma~\ref{lemma:greedy} we obtain
\begin{equation}
f(S\setminus E_S^*) \ge P \left(1-e^{-\gamma\frac{k -  \ceil{ \beta \tau} }{k-\tau}}\right)f(\opt_{(k - \tau, V \setminus S_0)}). \label{eq:final_eq}
\end{equation}

By further combining this with Lemma~\ref{lemma:second_lower_bound} we have
\begin{align}
f(S\setminus E_S^*) &\ge \max\{\theta f(\opt_{(k - \tau, V \setminus E_S^*)}) - f(\opt_{(k - \tau, V \setminus S_0)}), P \left(1-e^{-\gamma\frac{k -  \ceil{ \beta \tau} }{k-\tau}}\right)  f(\opt_{(k - \tau, V \setminus S_0)})\}\nonumber\\
&\ge \theta \frac{P \left(1-e^{-\gamma\frac{k -  \ceil{ \beta \tau} }{k-\tau}}\right)}{1 + P\left(1-e^{-\gamma\frac{k -  \ceil{ \beta \tau} }{k-\tau}}\right)} f(\opt_{(k - \tau, V \setminus E_S^*)}) \label{eq:last_mu_neq_zero_1}
\end{align}
where the second inequality follows from Lemma \ref{lemma:max_inc_dec}. By plugging in $\tau = \ceil{ck}$ we further obtain
\begin{align*}
	f(S\setminus E_S^*) &\ge \theta \frac{P \left(1-e^{-\gamma\frac{k -  \beta \ceil{ck} - 1}{(1-c)k}}\right)}{1 + P\left(1-e^{-\gamma\frac{k -  \beta\ceil{ck} - 1 }{(1-c)k}}\right)} f(\opt_{(k - \tau, V \setminus E_S^*)}) \\
	& \geq \theta \frac{P \left(1-e^{-\gamma\frac{1 -  \beta c - \frac{1}{k} - \frac{\beta}{k}}{1-c}}\right)}{1 + P\left(1-e^{-\gamma\frac{1 -  \beta c - \frac{1}{k} - \frac{\beta}{k} }{1-c}}\right)} f(\opt_{(k - \tau, V \setminus E_S^*)}) \\
	&\xrightarrow{k \to \infty} \frac{\theta P \left(1-e^{-\gamma\frac{1 -   \beta c }{1- c}}\right)}{1 + P\left(1-e^{-\gamma\frac{1 -   \beta c }{1-c}}\right)} f(\opt_{(k - \tau, V \setminus E_S^*)}).
\end{align*}

Finally, Remark~\ref{cor:main_remark} follows from Eq.~\eqref{eq:final_eq} when $\tau \in o\left(\frac{k}{\beta} \right)$ and $\beta \geq \log k$ (note that the condition $|S_0| = \lceil \beta \tau \rceil \leq k$ is thus satisfied), as $k \to \infty$, we have both $\frac{k -  \ceil{ \beta \tau} }{k-\tau} \to 1$ and $P = \frac{(\beta-1)\check{\nu}(1-\check{\alpha})}{1+(\beta-1)\check{\nu}(1-\check{\alpha})} \to 1$, when $\check{\nu} \in (0,1]$ and $\check{\alpha} \in [0,1)$.

\end{proof}

\subsection{Proof of Corollary~\ref{cor:main_thm_cor2}}
To prove this result we need the following two lemmas that can be thought of as the alternative to Lemma~\ref{lemma:mu} and~\ref{lemma:S0-E0}.
\begin{lemma}\label{lemma:alternate_mu}
Let $\mu' \in [0,1]$ be a constant such that $f(E_1) = \mu' f(S_1)$ holds. Consider $f(\cdot)$ with bipartite subadditivity ratio $\theta \in [0,1]$ defined in Eq.~\eqref{eq:two_separate_subbaditivity}. Then
\begin{equation}
	f(S \setminus E_S^*) \ge  (\theta - \mu') f(S_1). \label{eq:alternate_lower_bound_1}
\end{equation}
\end{lemma}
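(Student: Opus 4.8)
The plan is to exploit the bipartite subadditivity ratio directly on the greedily-selected set $S_1$, split into the adversarially removed part $E_1$ and the surviving part $S_1 \setminus E_1$. Since the definition of $\mu'$ controls $f(E_1)$ in terms of $f(S_1)$, the bipartite property immediately yields a lower bound on $f(S_1 \setminus E_1)$, which monotonicity then transfers to $f(S \setminus E_S^*)$. This mirrors the role of Lemma~\ref{lemma:mu}, but replaces the marginal-gain quantity $\mu$ by the absolute quantity $\mu'$ and pays for it with the factor $\theta$.

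Concretely, I would first establish the containment $S_1 \setminus E_1 \subseteq S \setminus E_S^*$. This holds because $S = S_0 \cup S_1$ with $S_0$ and $S_1$ disjoint (as $S_1$ is selected from $V \setminus S_0$), while $E_S^* = E_0 \cup E_1$ with $E_0 \subseteq S_0$ and $E_1 \subseteq S_1$; hence any element of $S_1 \setminus E_1$ lies in $S$ yet avoids both $E_0$ (being in $S_1$, disjoint from $S_0 \supseteq E_0$) and $E_1$ (by construction). By monotonicity of $f$ this gives $f(S \setminus E_S^*) \geq f(S_1 \setminus E_1)$.

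Next I would apply the bipartite subadditivity property from Eq.~\eqref{eq:2_separate_sa} with the ground set taken to be $S_1$, and the partition $A := E_1$, $B := S_1 \setminus E_1$, obtaining $f(E_1) + f(S_1 \setminus E_1) \geq \theta f(S_1)$. Rearranging and substituting $f(E_1) = \mu' f(S_1)$ from the definition of $\mu'$ yields $f(S_1 \setminus E_1) \geq \theta f(S_1) - \mu' f(S_1) = (\theta - \mu') f(S_1)$. Chaining this with the monotonicity bound from the previous step completes the argument.

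There is essentially no hard technical obstacle here; the only point requiring care is verifying the set containment $S_1 \setminus E_1 \subseteq S \setminus E_S^*$, which rests on the disjointness of $S_0$ and $S_1$ together with the decomposition $E_S^* = E_0 \cup E_1$. Everything else is a one-line invocation of the bipartite subadditivity definition and of $\mu'$.
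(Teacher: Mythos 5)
Your proposal is correct and follows essentially the same route as the paper's proof: apply the bipartite subadditivity ratio to the partition $S_1 = E_1 \cup (S_1 \setminus E_1)$, rearrange using $f(E_1) = \mu' f(S_1)$, and transfer the bound to $f(S \setminus E_S^*)$ via monotonicity. The only difference is that you spell out the set containment $S_1 \setminus E_1 \subseteq S \setminus E_S^*$, which the paper leaves implicit.
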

\begin{proof}
	By the definition of $\theta$, $f(S_1 \setminus E_1)+f(E_1) \ge \theta f(S_1)$. Hence,
	\begin{align*}
		f(S \setminus E_S^*) &\ge f(S_1 \setminus E_1) \\
		&\ge \theta f(S_1) - f(E_1)\\
		&= (\theta - \mu') f(S_1).
	\end{align*}
\end{proof}

\begin{lemma}\label{lemma:alternate_S0-E0}
Let $\beta$ be a constant such that $|S_0| = \lceil \beta \tau \rceil$ and $|S_0| \leq k$, and let $\check{\nu}, \nu \in [0,1]$ be superadditivity and subadditivity ratio (Eq.~\eqref{eq:superadditivity} and Eq.~\eqref{eq:subadditivity}, respectively). Finally, let $\mu'$ be a constant defined as in Lemma~\ref{lemma:alternate_mu}. Then,
\begin{equation}
	f(S \setminus E_S^*) \geq (\beta - 1) \check{\nu} \nu \mu' f(S_1). \label{eq:alternate_lower_bound_3}
\end{equation}
\end{lemma}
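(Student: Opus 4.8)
The plan is to follow the same chain of inequalities used in the proof of Lemma~\ref{lemma:S0-E0}, diverging only at the single step where the inverse curvature was invoked. First I would lower-bound $f(S \setminus E_S^*) \geq f(S_0 \setminus E_0)$ by monotonicity (since $S_0 \setminus E_0 \subseteq S \setminus E_S^*$), and then apply superadditivity to obtain $f(S_0 \setminus E_0) \geq \check{\nu} \sum_{e_i \in S_0 \setminus E_0} f(\lbrace e_i \rbrace)$, exactly reproducing Eq.~\eqref{eq:lower_bound_3_1}.

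Next I would reuse the counting argument from Eq.~\eqref{eq:lower_bound_3_2}--\eqref{eq:lower_bound_3_3}. Because $S_0$ is constructed by $\Obl$ selection, every singleton value attained in $S_0 \setminus E_0$ dominates every singleton value in $E_1$, i.e.\ $f(\lbrace i \rbrace) \geq f(\lbrace j \rbrace)$ for $i \in S_0 \setminus E_0$ and $j \in E_1$. Combined with $|S_0 \setminus E_0| = \lceil \beta \tau \rceil - |E_0| \geq (\beta - 1)\tau$ and $|E_1| \leq \tau$, this gives $\sum_{e_i \in S_0 \setminus E_0} f(\lbrace e_i \rbrace) \geq (\beta - 1) \sum_{e_i \in E_1} f(\lbrace e_i \rbrace)$, so that $f(S \setminus E_S^*) \geq (\beta - 1)\check{\nu} \sum_{e_i \in E_1} f(\lbrace e_i \rbrace)$.

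The key departure is the final step. Instead of telescoping singleton conditionals through the inverse curvature $\check{\alpha}$, I would apply the subadditivity ratio $\nu$ directly to the set $E_1$: by the definition in Eq.~\eqref{eq:subadditivity}, $\sum_{e_i \in E_1} f(\lbrace e_i \rbrace) \geq \nu f(E_1)$. Substituting and then invoking the definition $f(E_1) = \mu' f(S_1)$ yields $f(S \setminus E_S^*) \geq (\beta - 1)\check{\nu}\nu f(E_1) = (\beta - 1)\check{\nu}\nu\mu' f(S_1)$, as claimed. There is no genuine obstacle here, as the argument is a direct structural variant of Lemma~\ref{lemma:S0-E0}; the only point requiring care is that $\mu'$ is defined through the \emph{unconditioned} value $f(E_1)$ rather than through the marginal $f(E_1 \mid S \setminus E_S^*)$ that defines $\mu$. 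It is precisely this choice that lets the subadditivity bound at the level of the whole set $E_1$ replace the per-element conditional marginals, thereby trading the factor $(1 - \check{\alpha})$ for $\nu$.
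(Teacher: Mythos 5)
Your proof is correct and follows exactly the same chain of inequalities as the paper's own proof: monotonicity, superadditivity, the oblivious-selection counting argument, and then the subadditivity ratio applied to $E_1$ followed by the definition of $\mu'$. Your closing observation about why the unconditioned definition of $\mu'$ is what permits swapping $(1-\check{\alpha})$ for $\nu$ matches the paper's intent precisely.
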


\begin{proof}
	The proof follows that of Lemma~\ref{lemma:S0-E0}, with two modifications. In Eq.~\eqref{eq:alternate_lower_bound_3_5} we used the subadditive property of $f(\cdot)$, and Eq.~\eqref{eq:alternate_lower_bound_3_6} follows by the definition of $\mu'$. 
	\begin{align}
		f(S \setminus E_S^*) &\geq f(S_0 \setminus E_0) \nonumber \\
		& \geq \check{\nu} \sum_{e_i \in S_0 \setminus E_0} f(\lbrace e_i \rbrace)  \nonumber \\
		& \geq \frac{|S_0 \setminus E_0|}{|E_1|} \check{\nu} \sum_{e_i \in E_1} f(\lbrace e_i \rbrace) \nonumber \\
		& \geq \frac{(\beta - 1) \tau}{\tau} \check{\nu} \sum_{e_i \in E_1} f(\lbrace e_i \rbrace) \nonumber \\ 
		& \geq (\beta - 1) \check{\nu} \nu f \left(E_1\right) \label{eq:alternate_lower_bound_3_5}\\
		&= (\beta - 1) \check{\nu} \nu \mu' f(S_1) \label{eq:alternate_lower_bound_3_6}.
	\end{align}
\end{proof}
\par
Next we prove the main corollary. The proof follows the steps of the proof from Appendix~\ref{main_proof_section}, except that here we make use of Lemma~\ref{lemma:alternate_mu} and~\ref{lemma:alternate_S0-E0}.
\begin{proof}

We consider two cases, when $\mu'=0$ and $\mu' \neq 0$.
When $\mu' = 0$, from Lemma~\ref{lemma:alternate_mu} we have \[f(S\setminus E_S^*) \geq \theta f(S_1).\]
On the other hand, when $\mu'\neq 0$, by Lemma \ref{lemma:alternate_mu} and \ref{lemma:alternate_S0-E0} we have 
\begin{align}
f(S\setminus E_S^*) &\ge \max\{(\theta-\mu')f(S_1), (\beta-1)\check{\nu}\nu \mu' f(S_1)\} \nonumber \\
&\ge \theta \frac{(\beta-1)\check{\nu}\nu}{1+(\beta-1)\check{\nu}\nu} f(S_1) \label{eq:mu_S1}.
\end{align}
By denoting $P := \frac{(\beta-1)\check{\nu}\nu}{1+(\beta-1)\check{\nu}\nu}$ and observing that $P \in [0,1)$ once $\beta \geq 1$, we conclude that Eq.~\eqref{eq:mu_S1} holds for any $\mu' \in [0,1]$ once $\beta \geq 1$.

By combining Eq.~\eqref{eq:mu_S1} with Lemma~\ref{lemma:greedy} we obtain
\begin{equation}
f(S\setminus E_S^*) \ge \theta P \left(1-e^{-\gamma\frac{k -  \ceil{ \beta \tau} }{k-\tau}}\right)f(\opt_{(k - \tau, V \setminus S_0)}). \label{eq:last_mu_neq_zero}
\end{equation}
By further combining this with Lemma~\ref{lemma:second_lower_bound} we have
\begin{align}
f(S\setminus E_S^*) &\ge \max\{\theta f(\opt_{(k - \tau, V \setminus E_S^*)}) - f(\opt_{(k - \tau, V \setminus S_0)}), \theta P \left(1-e^{-\gamma\frac{k -  \ceil{ \beta \tau} }{k-\tau}}\right)  f(\opt_{(k - \tau, V \setminus S_0)})\}\nonumber\\
&\ge \frac{ \theta^2 P \left(1-e^{-\gamma\frac{k -  \ceil{ \beta \tau} }{k-\tau}}\right)}{1 + \theta P\left(1-e^{-\gamma\frac{k -  \ceil{ \beta \tau} }{k-\tau}}\right)} f(\opt_{(k - \tau, V \setminus E_S^*)}), \label{eq:final_cor}
\end{align}
where the second inequality follows from Lemma \ref{lemma:max_inc_dec}. By plugging in $\tau = \ceil{ck}$ in the last equation and by letting $k \to \infty$ we arrive at:
\[f(S\setminus E_S^*) \geq \frac{ \theta^2 P \left(1-e^{-\gamma\frac{1 -   \beta c }{1- c}}\right)}{1 + \theta P\left(1-e^{-\gamma\frac{1 -   \beta c }{1- c}}\right)} f(\opt_{(k - \tau, V \setminus E_S^*)}).\]
Finally, from Eq.~\eqref{eq:final_cor}, when $\tau \in o\left(\frac{k}{\beta} \right)$ and $\beta \geq \log k$, as $k \to \infty$, we have both $\frac{k -  \ceil{ \beta \tau} }{k-\tau} \to 1$ and $P = \frac{(\beta-1)\check{\nu}\nu}{1+(\beta-1)\check{\nu}\nu} \to 1$ (when $\nu, \check{\nu} \in (0,1]$). It follows
\begin{equation*}
	f(S\setminus E_S^*) \xrightarrow{k \to \infty} \frac{\theta^2 (1-e^{-\gamma})}{1 + \theta(1-e^{-\gamma})} f(\opt_{(k - \tau, V \setminus E_S^*)}).
\end{equation*}

\end{proof}

\section{Proofs from Section~\ref{section_applications}}
\label{appendix_3}

\subsection{Proof of Proposition~\ref{prop:support_selection}}
\begin{proof}
The goal is to prove: $\check{\gamma} \ge \frac{m}{L}$.

Let $S \subseteq [d]$ and $\Omega \subseteq [d]$ be any two disjoint sets, and for any set $A \subseteq [d]$ let $\x^{(A)}=\argmax_{\textrm{supp}(\x)\subseteq A, \x \in \mathcal{X}} l(\x)$. Moreover, for $B \subseteq [d]$ let $\x^{(A)}_B$ denote those coordinates of vector $\x^{(A)}$ that correspond to the indices in $B$.

We proceed by upper bounding the denominator and lower bounding the numerator in~\eqref{eq:supermodularity_ratio}. By definition of $\x^{(S)}$ and strong concavity of $l(\cdot)$,

\begin{align*}
	l(\x^{(S\cup\{i\})})-l(\x^{(S)}) &\le \langle \nabla l(\x^{(S)}), \x^{(S\cup\{i\})} -\x^{(S)}\rangle - \frac{m}{2}\norm{\x^{(S\cup\{i\})} - \x^{(S)}}^2\\
	& \le \max_{\v:\v_{(S \cup \{i\})^c=0}} \langle \nabla l(\x^{(S)}), \v-\x^{(S)}\rangle - \frac{m}{2}\norm{\v-\x^{(S)}}^2 \\
	& = \frac{1}{2m}\norm{\nabla l(\x^{(S)})_{i}}^2
\end{align*}

where the last equality follows by plugging in the maximizer $\v=\x^{(S)}+\frac{1}{m}\nabla l(\x^{(S)})_{i}$. Hence,
\[
	\sum_{i\in\Omega} \left(l(\x^{(S\cup\{i\})})-l(\x^{(S)})\right) \le \sum_{i\in\Omega} \frac{1}{2m}\norm{\nabla l(\x^{(S)})_{i}}^2 = \frac{1}{2m}\norm{\nabla l(\x^{(S)})_{\Omega}}^2.
\]
On the other hand, from the definition of $\x^{(S\cup \Omega)}$ and due to smoothness of $l(\cdot)$ we have
\begin{equation*}
\begin{split}
l(\x^{(S\cup \Omega)}) - l(\x^{(S)}) &\ge l(\x^{(S)}+\frac{1}{L}\nabla l(\x^{(S)})_{\Omega}) - l(\x^{(S)})\\
&\ge \langle\nabla l(\x^{(S)}), \frac{1}{L}\nabla l(\x^{(S)})_{\Omega}\rangle - \frac{L}{2}\norm{\frac{1}{L}\nabla l(\x^{(S)})_{\Omega}}^2 \\
&= \frac{1}{2L}\norm{l(\x^{(S)})_{\Omega}}^2.
\end{split}
\end{equation*}
It follows that
\[
	\frac{l(\x^{(S\cup \Omega)}) - l(\x^{(S)})}{\sum_{i\in\Omega} \left(l(\x^{(S\cup\{i\})})-l(\x^{(S)})\right)} \ge \frac{m}{L},\quad \forall \textrm{ disjoint } S,\Omega\subseteq [d] 
\]
We finish the proof by noting that $\check{\gamma}$ is the largest constant for the above statement to hold.

\end{proof}

\subsection{Variance Reduction in GPs}
\label{vr}

\subsubsection{Non-submodularity of Variance Reduction} \label{non-submodularity of vr}
The goal of this section is to show that the GP variance reduction objective is not submodular in general.
Consider the following PSD kernel matrix:
\[
\K =
\begin{bmatrix}
    1 & \sqrt{1 - z^2} & 0 \\
    \sqrt{1 - z^2} & 1 & z^2 \\
    0 & z^2 & 1  \\
\end{bmatrix}.
\]

We consider a single $x = \lbrace 3 \rbrace$ (i.e. $M$ is a singleton) that corresponds to the third data point. The objective is as follows:  
\[F(i|S) = \sigma^2_{\lbrace 3 \rbrace|S} - \sigma^2_{\lbrace 3 \rbrace|S \cup i}.\]

The submodular property implies $F(\lbrace 1 \rbrace) \geq F(\lbrace 1 \rbrace| \lbrace 2 \rbrace)$.
We have:
\begin{align*}
	F(\lbrace 1 \rbrace) &= \sigma^2_{\lbrace 3 \rbrace} - \sigma^2_{\lbrace 3  \rbrace | \lbrace 1 \rbrace} \\
	&= 1  - K(\lbrace 3 \rbrace, \lbrace 3\rbrace) - K(\lbrace 3 \rbrace, \lbrace 1\rbrace) (K(\lbrace 1 \rbrace, \lbrace 1\rbrace) + \sigma^2)^{-1} K(\lbrace 1 \rbrace, \lbrace 3 \rbrace)\\ 
	&= 1 - 1 + 0 = 0,
\end{align*}
and
\begin{align*}
	F(\lbrace 2 \rbrace) &= \sigma^2_{\lbrace 3 \rbrace} - \sigma^2_{\lbrace 3  \rbrace | \lbrace 2 \rbrace} \\
	&= 1  - K(\lbrace 3 \rbrace, \lbrace 3\rbrace) - K(\lbrace 3 \rbrace, \lbrace 2\rbrace) (K(\lbrace 2 \rbrace, \lbrace 2\rbrace) + \sigma^2)^{-1} K(\lbrace 2 \rbrace, \lbrace 3 \rbrace)\\ 
	&= 1 - (1 - z^2 (1 + \sigma^2)^{-1} z^2) = \frac{z^4}{1 + \sigma^2},
\end{align*}
and
\begin{align*}
	F(\lbrace 1, 2 \rbrace) &= \sigma^2_{\lbrace 3 \rbrace} - \sigma^2_{\lbrace 3  \rbrace | \lbrace 1,2 \rbrace} \\
	&= 1  - K(\lbrace 3 \rbrace, \lbrace 3\rbrace) + [K(\lbrace 3 \rbrace, \lbrace 1\rbrace), K(\lbrace 3 \rbrace, \lbrace 2\rbrace)] 
	\begin{bmatrix}
           1 + \sigma^2,  K(\lbrace 2 \rbrace, \lbrace 1 \rbrace)\\
           K(\lbrace 1 \rbrace, \lbrace 2 \rbrace), 1 + \sigma^2 
    \end{bmatrix}^{-1} 
	\begin{bmatrix}
           K(\lbrace 1 \rbrace, \lbrace 3 \rbrace) \\
           K(\lbrace 2 \rbrace, \lbrace 3 \rbrace)
    \end{bmatrix} \\ 
	&= 1  - 1 + [0, z^2] 
	\begin{bmatrix}
           1 + \sigma^2,  \sqrt{1 - z^2}\\
           \sqrt{1 - z^2}, 1 + \sigma^2 
    \end{bmatrix}^{-1} 
	\begin{bmatrix}
           0 \\
           z^2
    \end{bmatrix}\\
    &= \frac{z^4(1 + \sigma^2)}{(1 + \sigma^2)^2 - (1 - z^2)}.
\end{align*}

We obtain,
\begin{align*}
	F(\lbrace 1 \rbrace| \lbrace 2 \rbrace) &= F(\lbrace 1, 2 \rbrace) - F(\lbrace 2 \rbrace)\nonumber \\
	&= \frac{z^4}{(1 + \sigma^2) - (1 - z^2)(1 + \sigma^2)^{-1}} -  \frac{z^4}{1 + \sigma^2} \label{eq:F_12}.
\end{align*}
When $z \in (0,1)$, $F(\lbrace 1 \rbrace| \lbrace 2 \rbrace)$ is strictly greater than $0$, and hence greater than $F(\lbrace 1 \rbrace)$. This is in contradiction with the submodular property which implies $F(\lbrace 1 \rbrace) \geq F(\lbrace 1 \rbrace| \lbrace 2 \rbrace)$.

\subsubsection{Variance Reduction Curvature}
\label{prop:prop_vr_proof}
	
	In this section, we are interested in lower bounding the following ratio: $\frac{f( \lbrace i \rbrace | S \setminus \lbrace i \rbrace \cup \Omega)}{f(\lbrace i \rbrace| S \setminus \lbrace i \rbrace)}$.

	Let $k_{\text{max}} \in \mathbb{R_{+}}$ be the largest variance, i.e. $k(\x_i, \x_i) \leq k_{\text{max}}$ for every $i$. Consider the case when $M$ is a singleton set:
	\[
		f(i|S) = \sigma^2_{\x|S} - \sigma^2_{\x|S \cup i}.
	\] By using $\Omega = \lbrace i \rbrace$ in Eq.~\eqref{eq:vr_alternative_form}, we can rewrite $f(i|S)$ as
	\[
		f(i|S) = a_{i,S}^2 B_i^{-1},	
	\] where $a_{i,S}, B_i \in \mathbb{R_{+}}$, and are given by:
	\[
		a_{i,S} = k(\x, \x_i) - k(\x,\X_S)(k(\X_S, \X_S) + \sigma^2 \I)^{-1} k(\X_S, \x_i)
	\] and 
	\[
		B_i = \sigma^2 + k(\x_i, \x_i) - k(\x_i,\X_S)(k(\X_S,\X_S) + \sigma^2 \I)^{-1}k(\X_S, \x_i).
	\]
	By using the fact that $k(\x_i, \x_i) \leq k_{\text{max}}$, for every $i$ and $S$, we can upper bound $B_i$ by $\sigma^2 + k_{\text{max}}$ (note that $k(\x_i, \x_i) - k(\x_i,\X_S)(k(\X_S,\X_S) + \sigma^2 \I)^{-1}k(\X_S, \x_i) \geq 0$ as variance cannot be negative), and lower bound by $\sigma^2$. It follows that for every $i$ and $S$ we have:
	\[
		\frac{a_{i,S}^2}{\sigma^2 + k_{\text{max}}} \leq f(i|S) \leq \frac{a_{i,S}^2}{\sigma^2}. 
	\]
	Therefore,
	\begin{align*}
		\frac{f( \lbrace i \rbrace | S \setminus \lbrace i \rbrace \cup \Omega)}{f(\lbrace i \rbrace| S \setminus \lbrace i \rbrace)} &\geq 
		\frac{\sigma^2}{\sigma^2+k_{\text{max}}}	
		\frac{a_{i, S \setminus \lbrace i \rbrace \cup \Omega}^2}{a_{i, S \setminus \lbrace i \rbrace}^2}, \quad \forall S, \Omega \subseteq V, i \in S \setminus \Omega.	
	\end{align*}
	Hence, the curvature of the variance reduction objective depends on the following ratio $\frac{a_{i, S \setminus \lbrace i \rbrace \cup \Omega}^2}{a_{i, S \setminus \lbrace i \rbrace}^2}$. Under some further structural assumption this ratio can be bounded. We refer the interested reader to \cite{Halabi2019} for further details.  

\subsubsection{Alternative GP variance reduction form}
Here, the goal is to show that the variance reduction can be written as
\begin{equation}
\label{eq:vr_alternative_form}
	F(\Omega|S) = \sigma^2_{\x|S} - \sigma^2_{\x|S\cup \Omega} =  \a \B^{-1} \a^T,
\end{equation}
where $\a \in \mathbb{R}_{+}^{1 \times |\Omega \setminus S|}$, $\B \in \mathbb{R}_{+}^{|\Omega \setminus S| \times |\Omega \setminus S|}$ and are given by:
\[
	\a :=  k(\x, \X_{\Omega \setminus S}) - k(\x,\X_S)(k(\X_S, \X_S) + \sigma^2 \I)^{-1} k(\X_S, \X_{\Omega \setminus S}),
\] and
\[
	\B := \sigma^2 \I + k(\X_{\Omega \setminus S}, \X_{\Omega \setminus S}) - k(\X_{\Omega \setminus S},\X_S)(k(\X_S,\X_S) + \sigma^2 \I)^{-1}k(\X_S, \X_{\Omega \setminus S}).
\]

This form is used in the proof in Appendix~\ref{prop:prop_vr_proof}.
\begin{proof}
Recall the definition of the posterior variance:
\[
 	\sigma^2_{\x|S} = k(\x,\x) - k(\x,\X_{S})\left( k(\X_S, \X_S) + \sigma^2 \I_{|S|}  \right)^{-1} k(\X_S, \x).
\]

We have
\begin{align*}
 	F(\Omega|S) &= \sigma^2_{\x|S} - \sigma^2_{\x|S\cup \Omega}  \\
 				&= k(\x,\X_{S \cup \Omega})\left( k(\X_{S \cup \Omega}, \X_{S \cup \Omega}) + \sigma^2 \I_{|\Omega \cup S|}  \right)^{-1} k(\X_{S \cup \Omega}, \x) - k(\x,\X_{S})\left( k(\X_S, \X_S) + \sigma^2 \I_{|S|}  \right)^{-1} k(\X_S, \x)\\
 				&= [\m_1, \m_2] 
 				\begin{bmatrix}
            			\A_{11},  \A_{12}\\
            			\A_{21},  \A_{22} 
     			\end{bmatrix}^{-1}
     			\begin{bmatrix}
            			\m_1^T \\
            			\m_2^T
     			\end{bmatrix} 
     			- \m_1 \A_{11}^{-1} \m_1^T,
\end{align*}
where we use the following notation:
\begin{align*}
	\m_1 &:= k(\x, \X_S),\\
	\m_2  &:= k(\x, \X_{\Omega \setminus S}),\\
	\A_{11} &:= k(\X_S, \X_S) + \sigma^2 \I_{|S|},\\
	\A_{12} &:= k(\X_S, \X_{\Omega \setminus S}), \\
	\A_{21} &:= k(\X_{\Omega \setminus S}, X_S), \\
	\A_{22} &:= k(\X_{\Omega \setminus S}, \X_{\Omega \setminus S}) + \sigma^2 \I_{|\Omega \setminus S|}. 
\end{align*}

By using the inverse formula~\cite[Section 9.1.3]{petersen2008matrix} we obtain:
\begin{align*}
	F(\Omega|S) &= [\m_1, \m_2] 
				\begin{bmatrix*}
           			\A_{11}^{-1} + \A_{11}^{-1} \A_{12} \B^{-1} \A_{21} \A_{11}^{-1},&  -\A_{11}^{-1} \A_{12} \B^{-1}\\
           			-\B^{-1} \A_{21} \A_{11}^{-1},&  \B^{-1} 
    			\end{bmatrix*}
    			\begin{bmatrix}
           			\m_1^T \\
           			\m_2^T
    			\end{bmatrix} 
    			- \m_1 \A_{11}^{-1} \m_1^T,
\end{align*}
where 
\[
	\B := \A_{22} - \A_{21}\A_{11}^{-1} \A_{12}.
\]
\par
Finally, we obtain:
\begin{align*}
	F(\Omega|S) &= \m_1 \A_{11}^{-1} \m_1^T + \m_{1} \A_{11}^{-1} \A_{12} \B^{-1} \A_{21}\A_{11}^{-1}\m_1^T - \m_2 \B^{-1} \A_{21}\A_{11}^{-1}\m_1^T\\ & ~- \m_1 \A_{11}^{-1} \A_{12} \B^{-1}\m_{2}^T + \m_2 \B^{-1} \m_2^T - \m_1 \A_{11}^{-1} \m_1^T \\
	&= \m_{1} \A_{11}^{-1} \A_{12} \B^{-1} (\A_{21}\A_{11}^{-1}\m_1^T - \m_2^{T}) - \m_2 \B^{-1} (\A_{21}\A_{11}^{-1}\m_1^T - \m_2^{T}) \\
	&= (\m_{1} \A_{11}^{-1} \A_{12} - \m_2) \B^{-1} (\A_{21}\A_{11}^{-1}\m_1^T - \m_2^{T}) \\
	&= (\m_2 - \m_{1} \A_{11}^{-1} \A_{12}) \B^{-1} (\m_2^{T} - \A_{21}\A_{11}^{-1}\m_1^T).
\end{align*}
By setting 
\begin{align*}
	\a &:= \m_2 - \m_{1} \A_{11}^{-1} \A_{12} \\
	  &= k(\x, \X_{\Omega \setminus S}) - k(\x,\X_S)(k(\X_S, \X_S) + \sigma^2 \I)^{-1} k(\X_S, \X_{\Omega \setminus S})
\end{align*}
and 
\begin{align*}
	\a^T &:= \m_2^{T} - \A_{21}\A_{11}^{-1}\m_1^T \\
	    &= k(\X_{\Omega \setminus S}, \x) - k(\X_{\Omega \setminus S}, \X_S)(k(\X_S, \X_S) + \sigma^2 \I)^{-1}  k(\X_S, \x), 
\end{align*}
we have
\[
	F(\Omega|S) = \a \B^{-1} \a^T,
\] where
\[
	\B = \sigma^2 \I_{|\Omega \setminus S|} + k(\X_{\Omega \setminus S}, \X_{\Omega \setminus S}) - k(\X_{\Omega \setminus S}, \X_S)(k(\X_S,\X_S) + \sigma^2 \I_{|S|})^{-1}k(\X_S, \X_{\Omega \setminus S}).
\]
\end{proof}
\newpage
\section{Additional Experiments}
\label{additional_experiments}
\begin{figure}[h]\label{fig:supp_additional}
\begin{tabular}{cccc}
\subfloat[Lin.~reg.~($\tau=20$)]{\includegraphics[width=3.8cm,height=3.2cm]{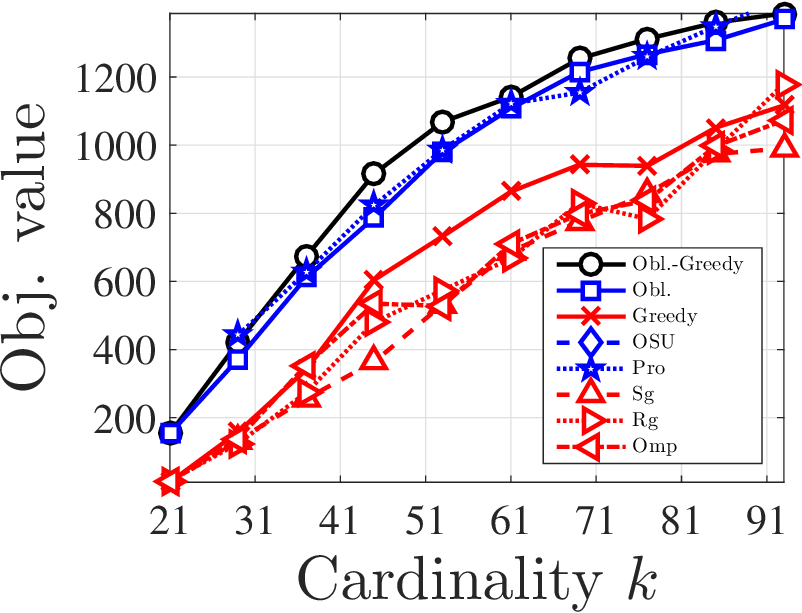}} &
\subfloat[Lin.~reg.~($\tau=40$)]{\includegraphics[width=3.8cm,height=3.2cm]{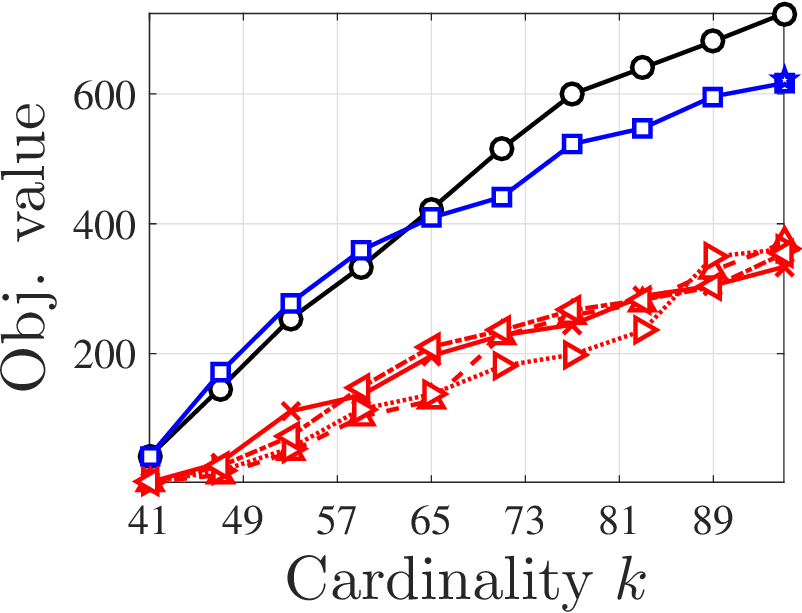}} &
\subfloat[Lin.~reg.~($\tau=20$)]{\includegraphics[width=3.8cm,height=3.2cm]{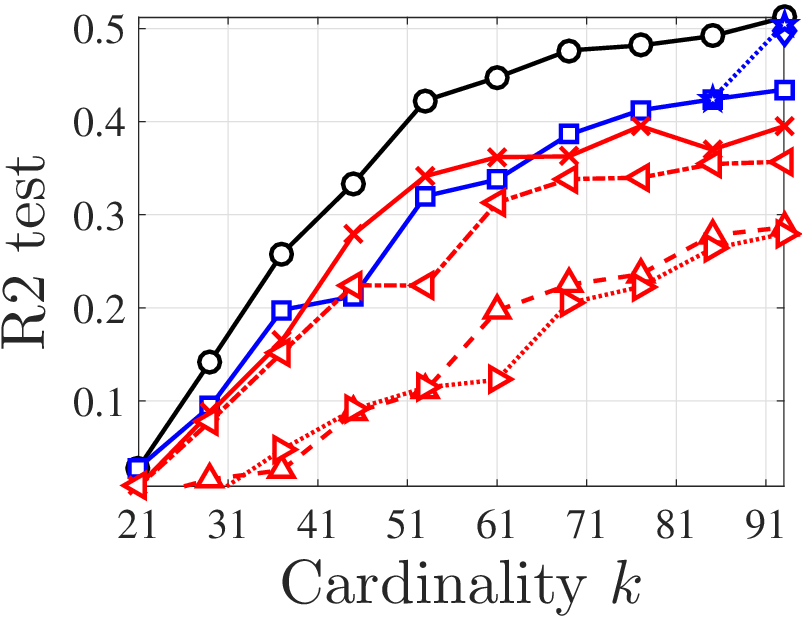}} &
\subfloat[Lin.~reg.~($\tau=40$)]{\includegraphics[width=3.8cm,height=3.2cm]{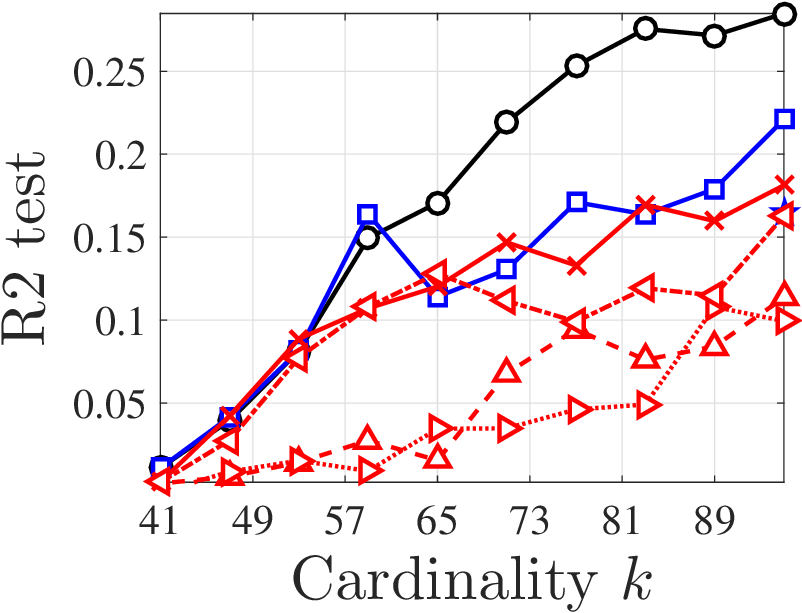}} \\
\subfloat[Log.~synthetic ($\tau=20$)]{\includegraphics[width=3.8cm,height=3.2cm]{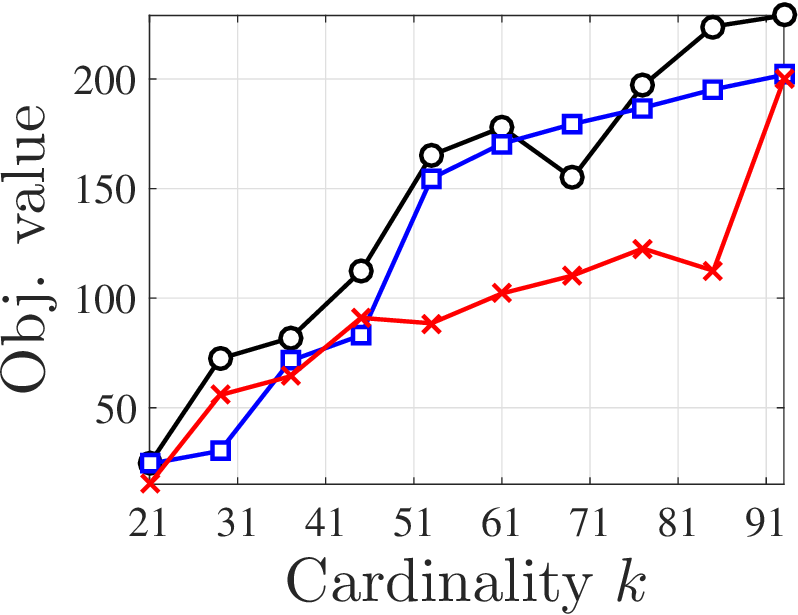}} &
\subfloat[Log.~synthetic ($\tau=40$)]{\includegraphics	[width=3.8cm,height=3.2cm]{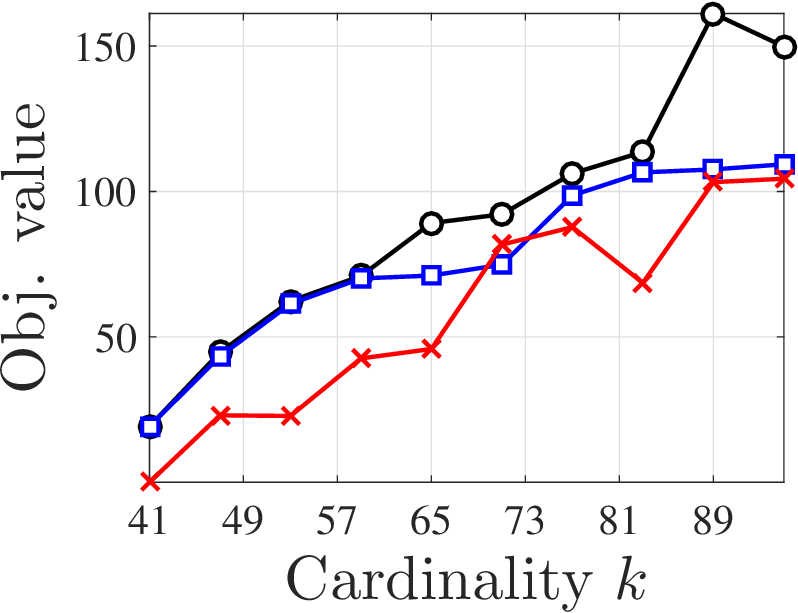}} &
\subfloat[Log.~synthetic ($\tau=20$)]{\includegraphics[width=3.8cm,height=3.2cm]{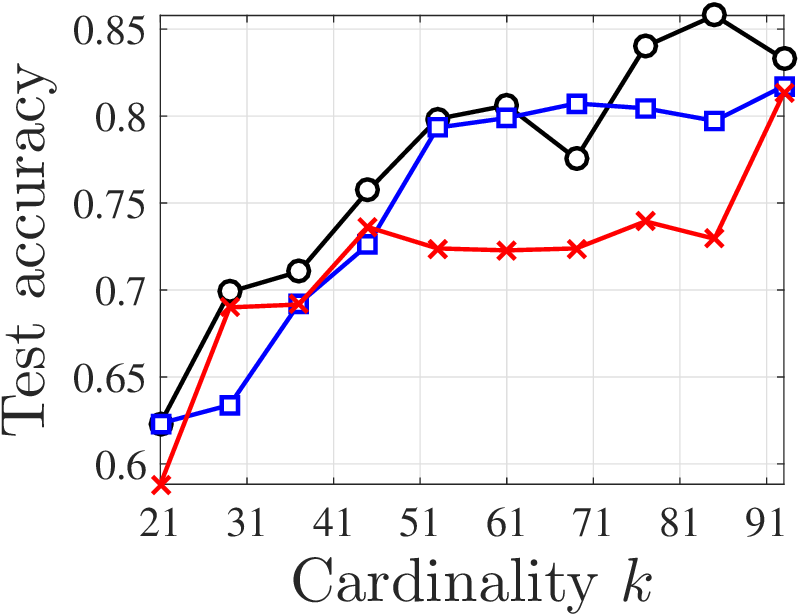}} &
\subfloat[Log.~synthetic ($\tau=40$)]{\includegraphics[width=3.8cm,height=3.2cm]{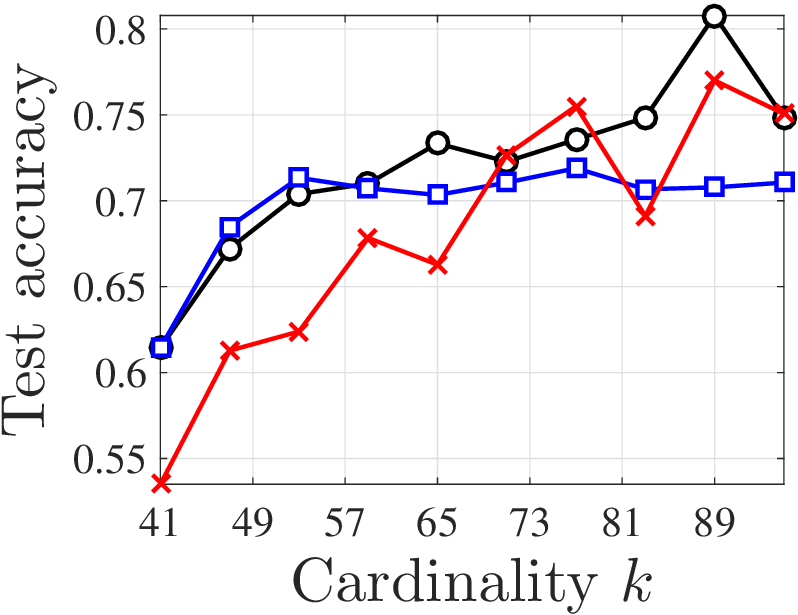}} \\
\subfloat[MNIST ($\tau=25$)]{\includegraphics[width=3.8cm,height=3.2cm]{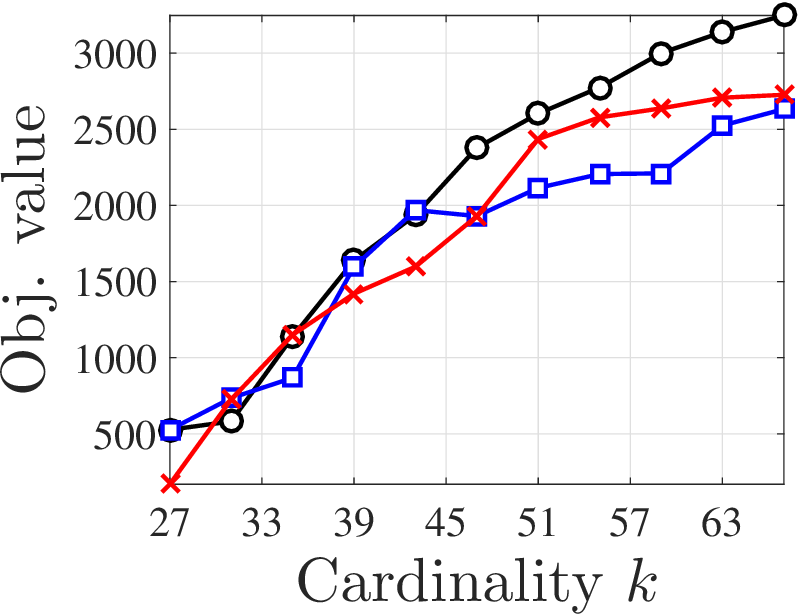}} &
\subfloat[MNIST ($\tau=35$)]{\includegraphics[width=3.8cm,height=3.2cm]{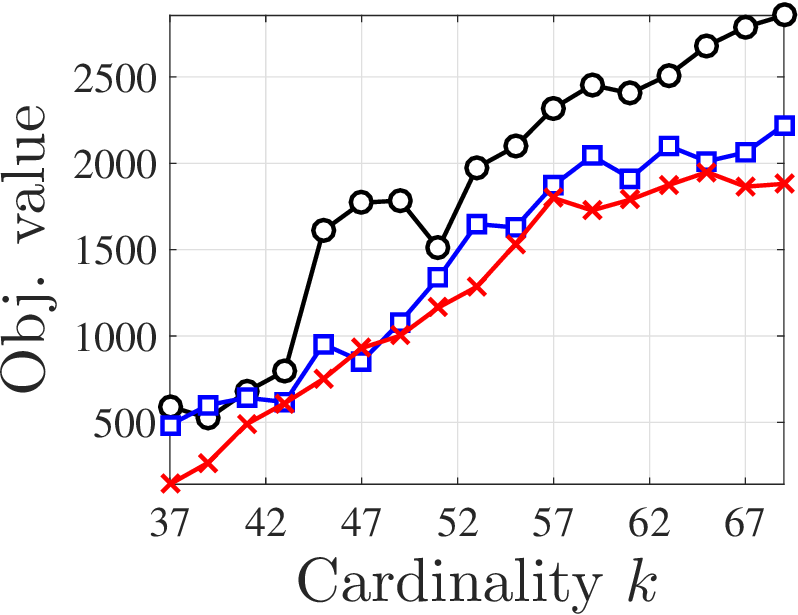}} &
\subfloat[MNIST ($\tau=25$)]{\includegraphics[width=3.8cm,height=3.2cm]{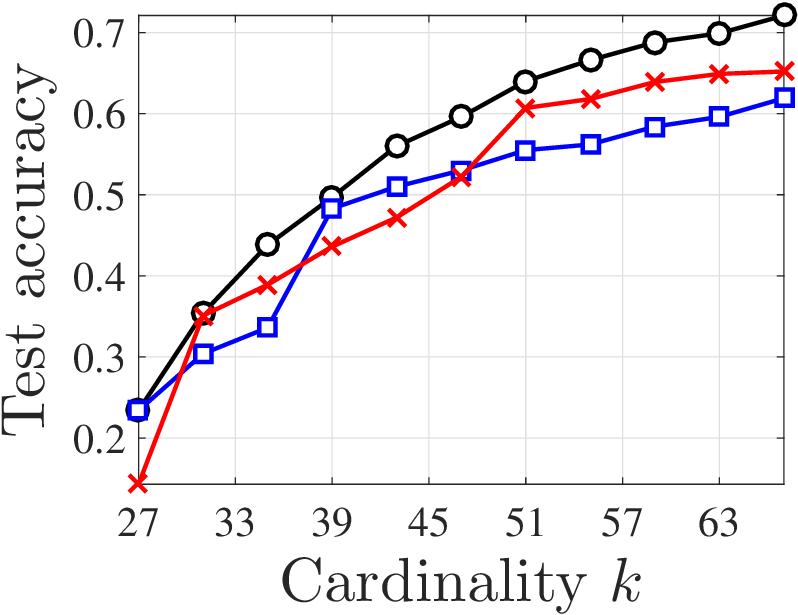}} &
\subfloat[MNIST ($\tau=35$)]{\includegraphics[width=3.8cm,height=3.2cm]{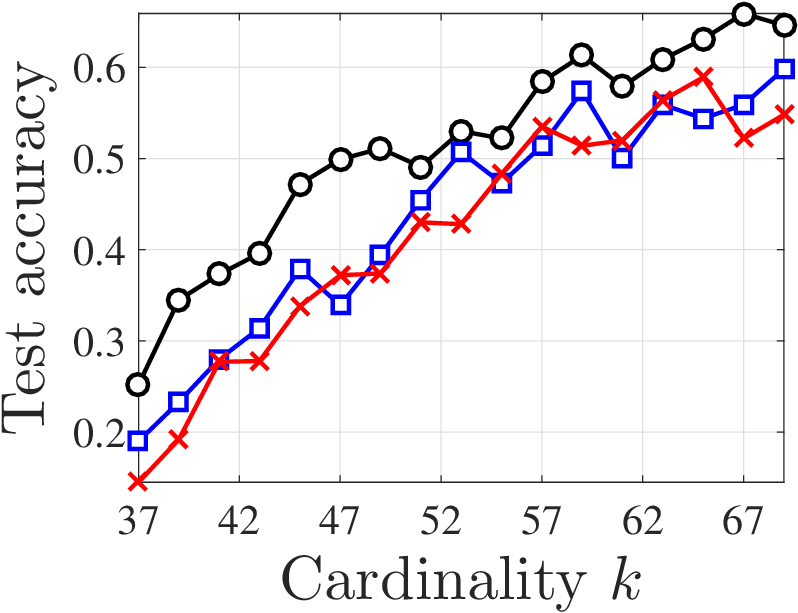}}
\end{tabular}
  \caption{Additional experiments for comparison of the algorithms on support selection task.}
\label{fig:lin_syn_fig}
\end{figure}

\end{document}